\DeclarePairedDelimiter\abs{\lvert}{\rvert}%
\newcounter{program}
\definecolor{codegreen}{rgb}{0,0.6,0}
\definecolor{codegray}{rgb}{0.5,0.5,0.5}
\definecolor{codepurple}{rgb}{0.58,0,0.82}
\definecolor{backcolour}{rgb}{0.95,0.95,0.92}
\lstdefinestyle{mystyle}{
    backgroundcolor=\color{backcolour},   
    commentstyle=\color{codegreen},
    keywordstyle=\color{magenta},
    numberstyle=\tiny\color{codegray},
    stringstyle=\color{codepurple},
    basicstyle=\ttfamily\footnotesize,
    breakatwhitespace=false,         
    breaklines=true,                 
    captionpos=b,                    
    keepspaces=true,                 
    numbers=left,                    
    numbersep=5pt,                  
    showspaces=false,                
    showstringspaces=false,
    showtabs=false,                  
    tabsize=2
}
\tikzset{
->, 
>=stealth, 
node distance=1.5cm, 
every state/.style={thick}, 
initial text=$ $, 
}
\newcommand{\N}{\mathbb{N}}
\newcommand{\leftact}{\textnormal{\texttt{LEFT}}}
\newcommand{\rightact}{\textnormal{\texttt{RIGHT}}}
\newcommand{\upact}{\textnormal{\texttt{UP}}}
\newcommand{\downact}{\textnormal{\texttt{DOWN}}}
\newcommand{\edges}{\textnormal{\texttt{Edges}}}
\newcommand{\regions}{\textnormal{\texttt{Regions}}}
\newcommand{\actions}{\textnormal{\texttt{Actions}}}
\newcommand{\adj}{\textnormal{\texttt{Adj}}}
\newcommand{\PEnv}{\mathscr{P}_{\text{Env}}}
\newcommand{\PPol}{\mathscr{P}_{\text{Pol}}}
\title{Programmatic Reinforcement Learning:\\ Navigating Gridworlds}
\author{
    Guruprerana Shabadi\textsuperscript{\rm 1,\rm 3},
    Nathana{\"e}l Fijalkow\textsuperscript{\rm 2,\rm 3},
    Th{\'e}o Matricon\textsuperscript{\rm 2}
}
\newtheorem{theorem}{Theorem}
\newtheorem{example}{Example}
\newtheorem{lemma}{Lemma}
\begin{document}

\maketitle

\begin{abstract}
The field of reinforcement learning (RL) is concerned with algorithms for learning optimal policies in unknown stochastic environments.
Programmatic RL studies representations of policies as programs, meaning involving higher order constructs such as control loops.
Despite attracting a lot of attention at the intersection of the machine learning and formal methods communities, very little is known on the theoretical front about programmatic RL: what are good classes of programmatic policies? How large are optimal programmatic policies? How can we learn them?
The goal of this paper is to give first answers to these questions, initiating a theoretical study of programmatic RL.
Considering a class of gridworld environments, we define a class of programmatic policies.
Our main contributions are to place upper bounds on the size of optimal programmatic policies, and to construct an algorithm for synthesizing them.
These theoretical findings are complemented by a prototype implementation of the algorithm.
\end{abstract}

%

\section{Introduction}
\label{sec:introduction}
Reinforcement Learning (RL) is a very popular and successful field of machine learning where the agent learns a policy in an unknown environment through numerical rewards, modelled as a Markov decision process (MDP). In the tabular setting, the environment is given explicitly, which implies that typically policies are also represented explicitly, meaning as functions mapping each state to an action (or distribution of actions). Such a representation becomes quickly intractable when the environment is large and makes it hard to compose policies or reason about them.
In the general setting, the typical assumption is that the environment can be simulated as a black-box. Deep reinforcement learning algorithms which learn policies in the form of large neural networks have been scaled to achieve expert-level performance in complex board and video games \cite{DeepMindChessGo, DeepMindStarCraft2}. 
However, they suffer from the same drawbacks as neural networks which means that the learned policies are vulnerable to adversarial attacks \cite{MinimalisticAdversarialAttacks} and do not generalize to novel scenarios \cite{Snderhauf2018LimitsDLRobotics}. Moreover, big neural networks are very hard to interpret and their verification is computationally infeasible.

To alleviate these pitfalls, a growing body of work has emerged which aims to learn policies in the form of programs~\cite{VermaMSKC18, BastaniPS18, Verma0YC19, ZhuXMJ19, Inala2020SynthesizingPP, Landajuela2021DiscoveringSP, Trivedi2021LearningTS, Qiu2022ProgrammaticRL, Andriushchenko2022FSCPOMDP, Liang2022LLMCodePolicies}, under the name ``\textit{programmatic reinforcement learning}''. Programmatic policies can provide concise representations of policies which are easier to read, interpret, and verify. Furthermore, their short size compared to neural networks would mean that they can also generalize well to out-of-training situations while also smoothing out erratic behaviors.
The goal of the line of work we initiate here is to lay the theoretical foundations for programmatic reinforcement learning.

\vskip1em
A programming language defined for a specific set of tasks is commonly called a Domain Specific Language (DSL).
All the works cited above use very simple DSLs, combining finite state machines, decision trees, and Partial Integral Derivate (PID) controllers (originating from control theory). We believe -- and show evidence in this work -- that more expressive DSLs can help describing policies succinctly and naturally. The fundamental question we ask is:

\begin{quotation}
Given a class $\PEnv$ of environments, how to define a DSL $\PPol$ such that
for each environment in $\PEnv$, there exists an optimal\footnote{Different notions of optimality can be considered here.} programmatic policy $\sigma \in \PPol$.
\end{quotation}
Designing a DSL means choosing appropriate programming paradigm, control operators, and primitives.
The design of $\PPol$ is a compromise: the DSL should be rich enough to express optimal policies, but simple enough to meet the objectives stated above: readable, interpretable, verifiable, generalizable, as well as learnable.
We call such statements ``\textit{expressivity results}''. We find many instances of expressivity results originating from different fields:
\begin{itemize}
	\item In the study of games, positionality results for MDPs~\cite{mdpchapter,gamesbook} state the existence of optimal pure memoryless policies.
	\item In program verification, more specifically in the analysis of pushdown systems~\cite{BouajjaniEM97,pushdownchapter}, there are several results proving the existence of optimal pushdown policies.
	\item In automatic control, PID controllers~\cite{PID} form a very classical and versatile class of programmatic controllers widely used in practice for continuous systems.
	\item In machine learning, the fact that neural networks are universal approximators implies that neural networks can be used as (approximate) programmatic policies for general reinforcement learning tasks.
\end{itemize}
Once existence is understood, one can wonder about sizes: how large are optimal programmatic policies? We refer to results placing upper and lower bounds on sizes of optimal programmatic policies as ``\textit{succinctness results}''.

\paragraph*{Contributions.}
In this work we focus on (two dimensional) gridworlds, which is a classical example in RL and inspired from theoretical robotics.
We construct a very simple and elegant class of programmatic policies in the form of sequences of subgoals, inspired by Shannon's early experiments on mechanical mice. 
Our main contribution is a theoretical result: we prove the existence of optimal programmatic policies (an \textit{expressivity} result), and place upper bounds on their sizes (a \textit{succinctness} result). 
To the best of our knowledge, this is the first example of a non-trivial DSL, employing at its heart control loops, which can express optimal policies in a succinct and natural way on a large class of environments.

Together with this article we release a small Python package including modules for generating instances of the environments we study as well as implementation of all the algorithms defined here. See the appendix for more information.

\paragraph*{Outline.}
We define gridworlds in Section~\ref{sec:gridworld}.
We introduce our DSL, called the ``subgoal DSL'', in Section~\ref{sec:shannon}, which defines the class of programmatic policies.
Our algorithm for constructing programmatic policies follows two steps:
first in Section~\ref{sec:tree} we define an algorithm for constructing the tree of shortest paths,
and second in Section~\ref{sec:deriving_policy_programs} we define a second algorithm for extracting from the tree a programmatic policy.
The most technical proofs can be found in the appendix, together with implementation details and experiments.

\section{Gridworlds}
\label{sec:gridworld}
Gridworlds form a very classical example of environments in reinforcement learning and beyond.
The particular class of gridworlds we consider here is inspired by theoretical robotics, and closely resembles for instance~\cite{DegenerKLHPW11}.
The same model (with minor variations) was studied to model hybrid systems~\cite{AsarinMP95}.

We consider gridworlds in two dimensions: the state space is $[0,\ell]^2 \subset \mathbb{R}^2$.
It is divided into closed convex polygonal regions, which we refer to later as the \textit{regions}.
In each region we specify a convex cone of available actions.
A move inside a region consists in picking an available action and moving along it within the region.
We also fix an initial state and a target region: the goal is from the initial state to reach any state in the target region.

\begin{example}[Spiral]\label{ex:spiral}
\begin{figure}
    \centering
    \begin{subfigure}[b]{0.45\linewidth}
	    \centering
	    \includegraphics[width=\linewidth]{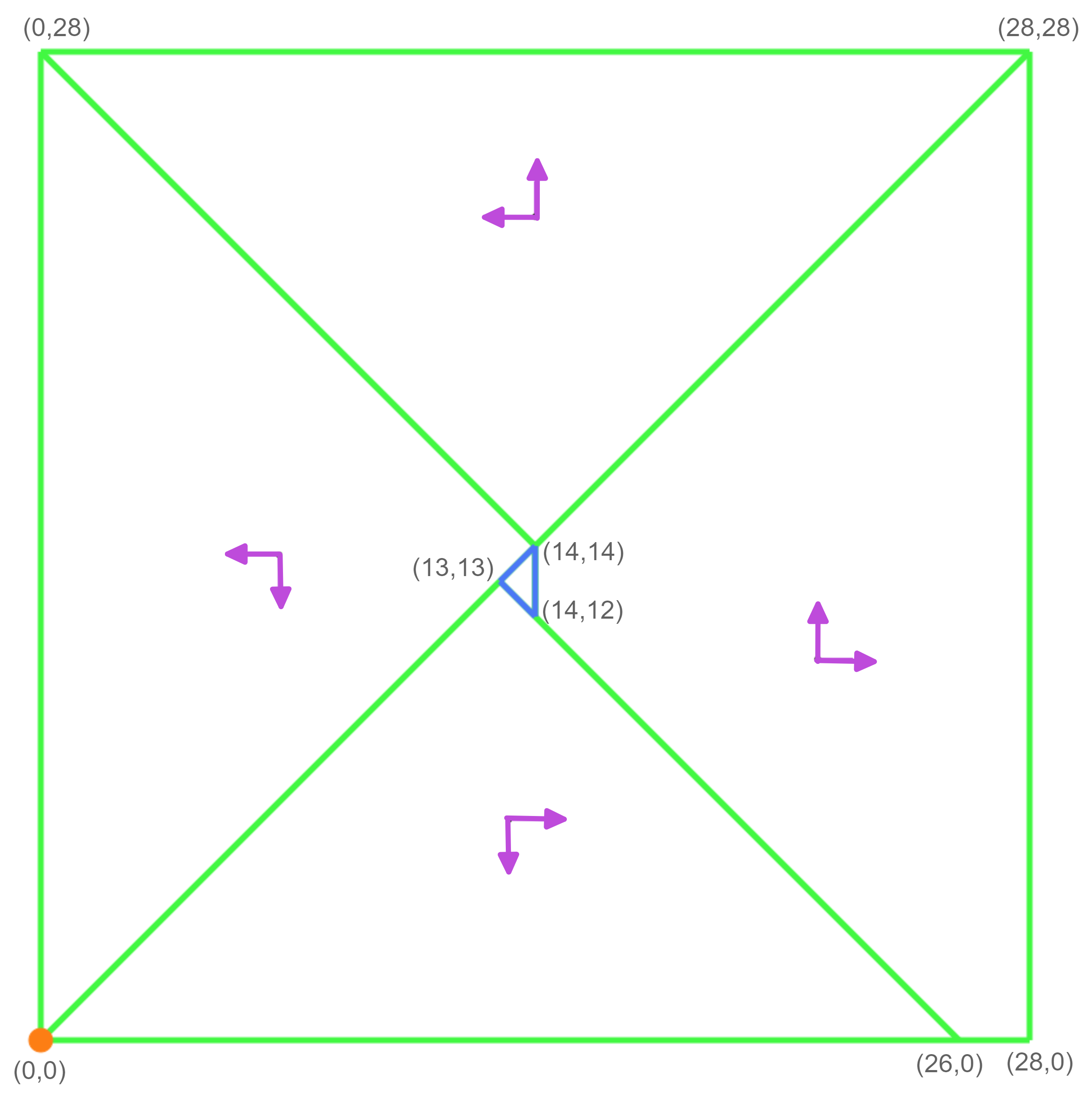}
	    \caption{Gridworld}
	    \label{fig:spiral-gridworld}
    \end{subfigure}
    \begin{subfigure}[b]{0.45\linewidth}
        \centering
        \includegraphics[width=\linewidth]{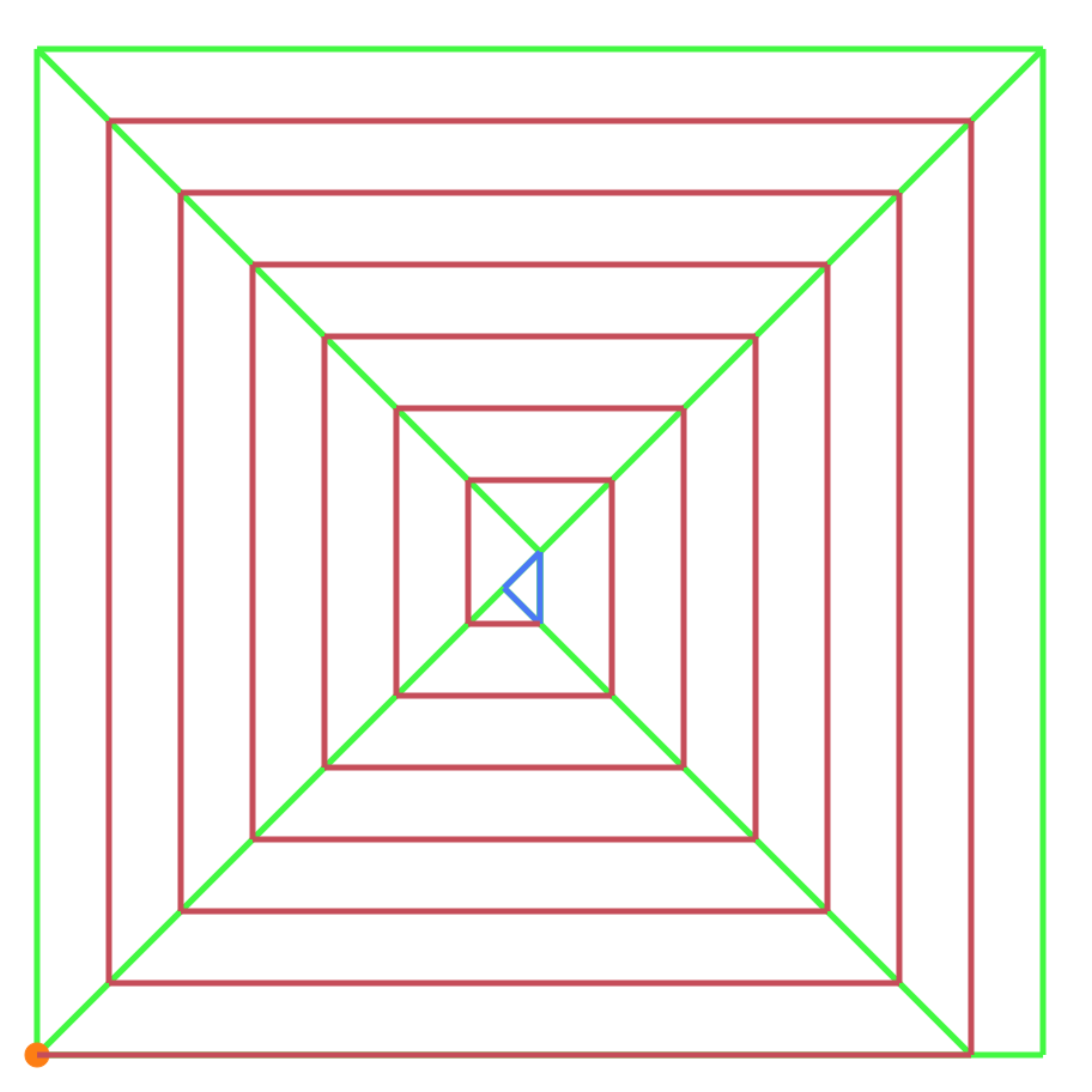}
        \caption{The optimal policy}
        \label{fig:spiral-policy}
    \end{subfigure}
    \caption{Spiral}
    \label{fig:spiral}
\end{figure}
In Figure \ref{fig:spiral-gridworld} we have an example of a gridworld. 
There are $5$ regions with the convex cone of available actions indicated by arrows within the region. The initial state (in orange) is \((0,0)\) and the target region is the triangular region in the middle. Figure \ref{fig:spiral-policy} visualizes the path from the initial state to the target region, spiralling around~it. 
\end{example}

Formally\footnote{We remark that since we have a continuous state space, the value of $\ell$ can always be scaled and it does not play a role in the presented results.}, states are pairs $(x,y) \in [0,\ell]^2$. For any \(v_1, v_2 \in [0,\ell]^2\), we define the segment \([v_1, v_2]\) to be the set of points connecting \(v_1\) and \(v_2\). 
We let $\regions$ denote the set of regions, which are closed convex polygons: we assume that $\bigcup_{R \in \regions} R = [0,\ell]^2$ and any two different regions only intersect on edges.

We let $\actions(R)$ denote the set of actions available in region $R$, it is a convex cone in $\mathbb{R}^2$ generated by a subset of the four cardinal directions \(\texttt{LEFT}, \texttt{RIGHT}, \texttt{UP}\), and \(\texttt{DOWN}\).
We say that there exists a move between $v_1$ and $v_2$ if they belong to the same region and $v_2 - v_1 \in \actions(R)$.
By extension, there exists a path from $v_1$ to $v_k$ if there exists a sequence of consecutive moves starting in $v_1$ and ending in $v_k$.
The length of a path is its number of moves\footnote{Note that this notion of length is different from the total length of the segments describing the path.}.

For a region \(R \in \regions\), we use \(\texttt{Edges}(R)\) to refer to the set of edges of~$R$. 
The convexity assumptions imply that we can restrict our attention to moves between edges:

\begin{lemma}
If $v_1$ and $v_2$ belong to the same region and there exists a path from $v_1$ to $v_2$ inside this region, then there exists a move between $v_1$ and $v_2$.
\end{lemma}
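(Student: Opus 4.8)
The plan is to unwind the definition of a path and exploit the single structural fact we need, namely that $\actions(R)$ is a convex cone. Given the hypothesis, write the path from $v_1$ to $v_2$ inside the common region $R$ as a sequence of consecutive moves $v_1 = w_0, w_1, \dots, w_k = v_2$, where each $w_i \in R$ and $w_{i+1} - w_i \in \actions(R)$ for $0 \le i < k$. Telescoping,
\[
v_2 - v_1 \;=\; \sum_{i=0}^{k-1} (w_{i+1} - w_i),
\]
so $v_2 - v_1$ is a finite sum of vectors in $\actions(R)$.

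Next I would invoke the elementary fact that a convex cone $C \subseteq \mathbb{R}^2$ is closed under finite sums: for $x,y \in C$, convexity gives $\tfrac12(x+y) \in C$, and then the cone property gives $x+y = 2 \cdot \tfrac12(x+y) \in C$; a trivial induction extends this to any finite sum, and the empty sum (the case $k=0$, i.e.\ $v_1 = v_2$) is $0 \in C$. Applying this with $C = \actions(R)$ yields $v_2 - v_1 \in \actions(R)$. Since moreover $v_1$ and $v_2$ lie in the same region $R$ by hypothesis, the pair $(v_1, v_2)$ satisfies exactly the definition of a move, which completes the proof.

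There is essentially no obstacle here; the lemma is a direct consequence of closure of convex cones under addition. The only points worth spelling out are that ``path inside a region'' forces every intermediate vertex $w_i$ to belong to $R$, so that each increment $w_{i+1} - w_i$ is genuinely an available action, and --- if one reads a move as physically travelling along the segment $[v_1, v_2]$ --- that convexity of the polygon $R$ guarantees $[v_1, v_2] \subseteq R$, so the move stays within the region.
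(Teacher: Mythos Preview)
Your argument is correct and is exactly the proof the paper has in mind: the lemma is stated without proof, merely prefaced by ``the convexity assumptions imply\ldots'', and your telescoping plus closure of a convex cone under finite sums is precisely that implication made explicit. There is nothing to add.
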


Note that edges are segments, but when introducing an edge we implicitly mean the edge of some region.
Since each edge is shared by at most two regions, we can define \(\adj(R, [v, v']) \in \regions\) to be the region adjacent to \(R\) which both share the edge \([v, v']\) on their boundaries. However, since some edges lie on the boundaries of the grid \([0, \ell]^2\), \(\adj(R, [v, v'])\) might not exist.

Note that gridworlds are deterministic environments, which implies that our goal is to find a minimal \textit{path} from the initial state to the target region, where minimal means of minimal length. Thus, we will be using path and policy interchangeably moving forward. 

The first direction we explore while searching for concise representations of policies is region based policies where the policy picks a single action per region. For example, with the spiral gridworld from Example~\ref{ex:spiral}, it is sufficient to pick one action per region to navigate the agent from the initial state to the target region. However, this is not the case in general, as shown in the following example.

\begin{example}[Double pass triangle]\label{ex:double-pass-triangle}
\begin{figure}
    \centering
    \begin{subfigure}[b]{0.45\linewidth}
	    \centering
	    \includegraphics[width=\linewidth]{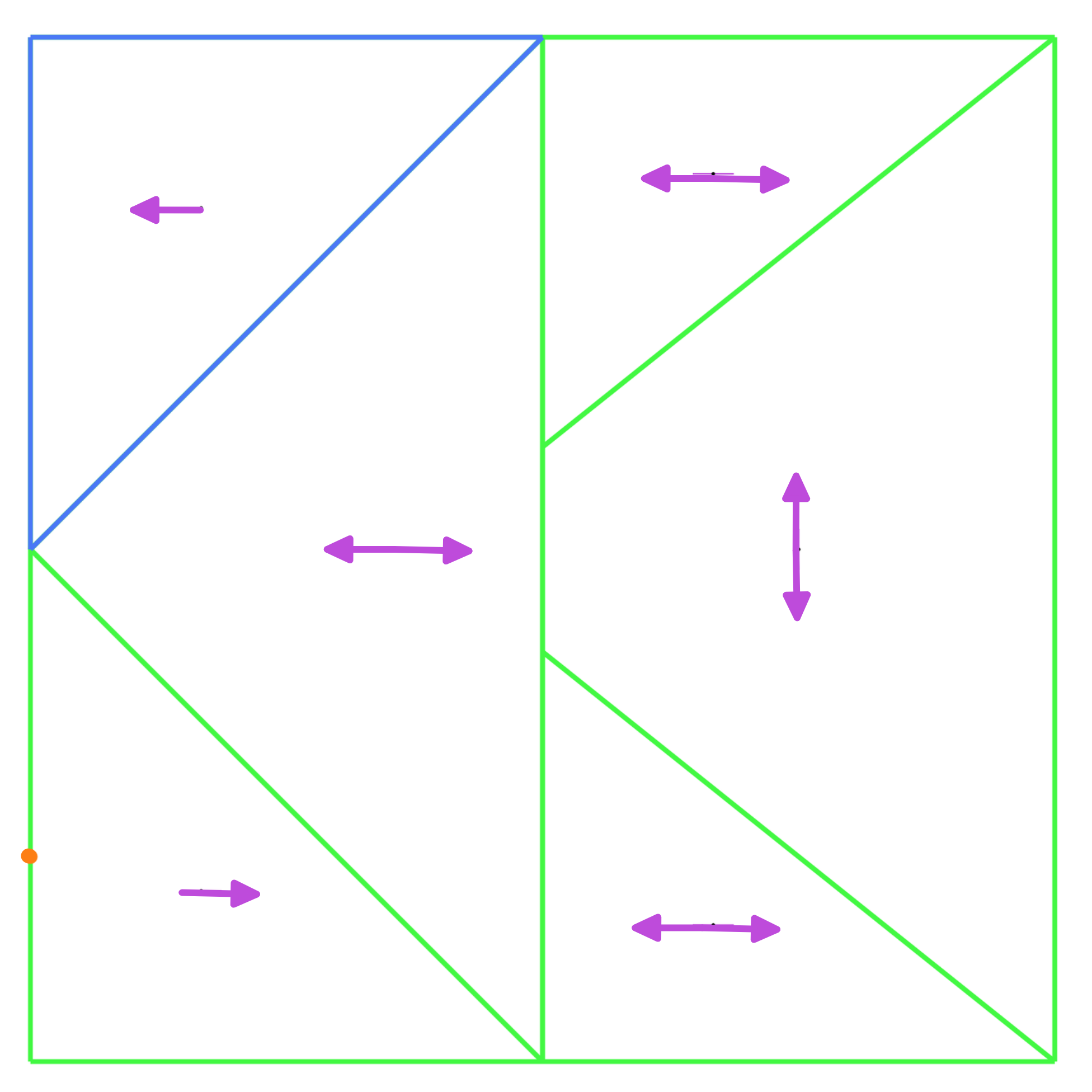}
	    \caption{Gridworld}
	    \label{fig:one-triangle-two-pass-gridworld}
    \end{subfigure}
    \begin{subfigure}[b]{0.45\linewidth}
        \centering
        \includegraphics[width=\linewidth]{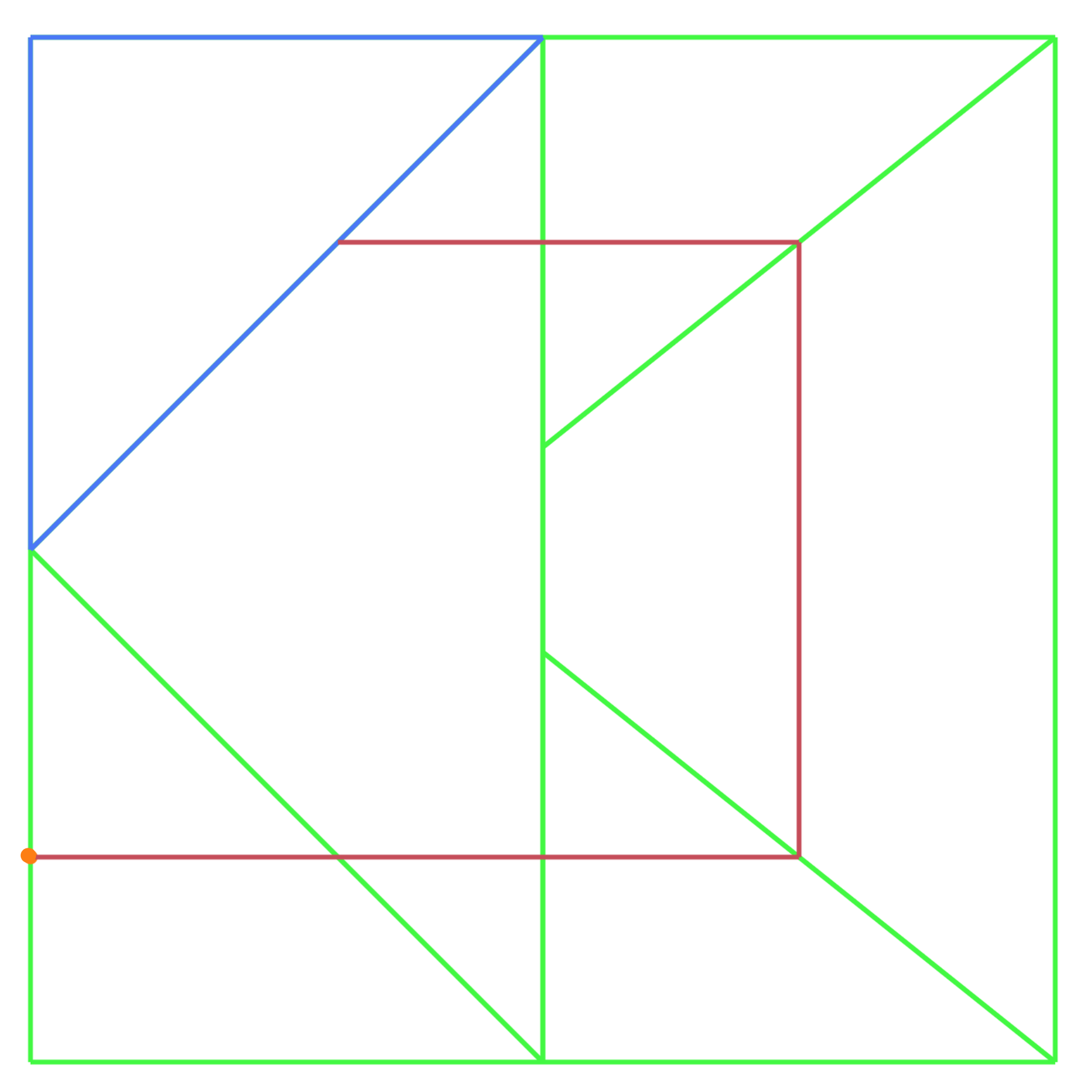}
        \caption{Policy path}
        \label{fig:one-triangle-two-pass-policy}
    \end{subfigure}
    \caption{Double pass triangle}
    \label{fig:one-triangle-two-pass}
\end{figure}
The gridworld is represented in Figure \ref{fig:one-triangle-two-pass-gridworld}.
Figure~\ref{fig:one-triangle-two-pass-policy} visualizes the (shortest) path from the initial state to the target region. 
The important remark here is that the triangular region in the middle is crossed twice, with different actions: first time right, second time left.
\end{example}

\section{The subgoal DSL}
\label{sec:shannon}
\subsection{Inspiration}

In 1950, Claude Shannon built, as a small project at home, one of the first instances of machine learning that the world had witnessed: a mechanical mouse capable of learning to solve a configurable maze in which the maze walls could be positioned as desired\footnote{\url{https://www.technologyreview.com/2018/12/19/138508/mighty-mouse/}}. To do this, he repurposed telephone relay circuits and placed them underneath the maze board to navigate the mouse towards the exit. In a first pass, the mouse would systematically explore the whole maze looking for the exit and \textit{learn} the path, so that in its subsequent attempts, it could swiftly reach the target. The magic was hidden in the relay circuits which would remember the path and were able to tell the mouse to turn left or right based on whether a switch was on or off. The first attempt is reminiscent of reinforcement learning with a trial and error approach to learn a policy to solve the game. However, our focus in this work will be on the subsequent attempts where we observe a programmatic abstraction to obtain a concise representation of the policy: instead of specifying the direction to follow at each point of the maze, the relay switches only indicated the points at which to change direction.

\begin{figure}
    \centering
    \begin{subfigure}{0.4\linewidth}
        \includegraphics[width=\linewidth]{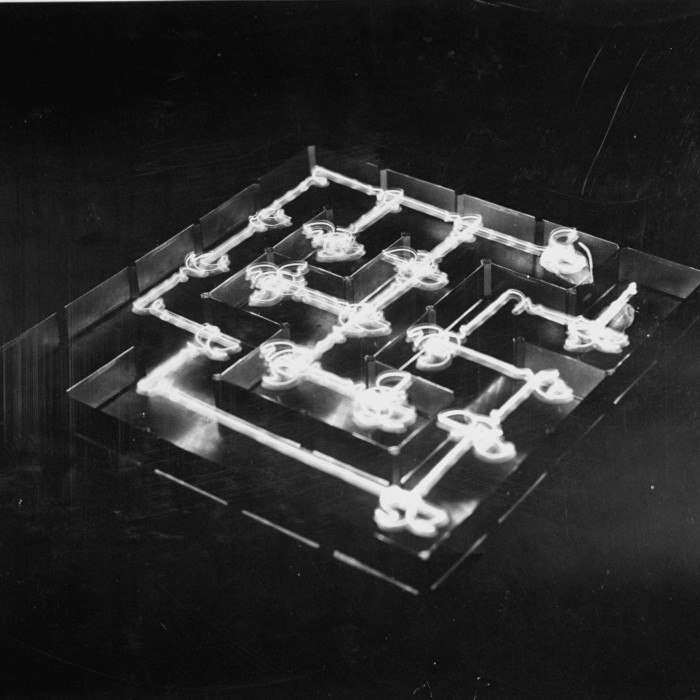}
    \end{subfigure}
    \begin{subfigure}{0.4\linewidth}
        \includegraphics[width=\linewidth]{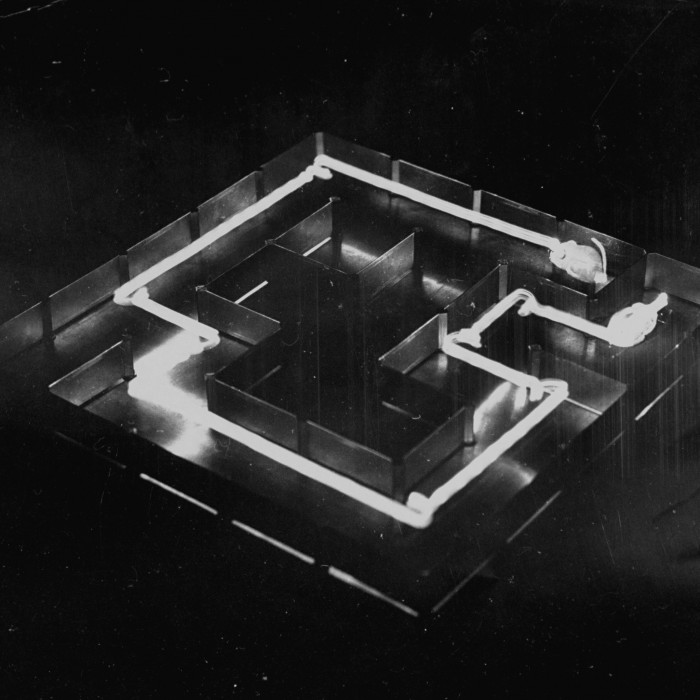}
    \end{subfigure}
    \caption{Images published in Life magazine in 1952 which show the path taken by Claude Shannon's mechanical mouse first while learning to navigate the maze and the direct path taken in its second attempt.}
    \label{fig:shannon-mouse-path}
\end{figure}

\subsection{Programmatic subgoal policies}

Before defining the DSL, let us dive into an example.

\begin{example}
The following code block contains a policy program for the spiral example given in Example~\ref{ex:spiral}.
The segments are identified by their coordinates.
\begin{lstlisting}
Do:
    From [(0, 0), (13, 13)] ->
        Target [(13, 13), (14, 12)], Preference: (13,13)
        Else Target [(14, 12), (26, 0)], Preference: (14,12)
    From [(14, 12), (26, 0)] ->
        Target [(14, 14), (28, 28)], Preference: (14, 14)
    From [(14, 14), (28, 28)] ->
        Target [(14, 14), (0, 28)], Preference: (0,28)
    From [(14, 14), (0, 28)] ->
        Target [(0, 0), (13, 13)], Preference: (0,0)
Until([(13, 13), (14, 12)])
\end{lstlisting}
This program ensures to reach the segment \([(13,13), (14,12)]\), which is an edge of the target region.
To do this, it specifies moves starting from four segments: \([(0, 0), (13, 13)]\), \([(14, 12), (26, 0)]\), \([(14, 14), (28, 28)]\), and \([(14, 14), (0, 28)]\).
Figure \ref{fig:spiral-gridworld} can help visualize these segments: this is the four diagonals.
From \([(0,0), (13,13)]\) we have two target segments: if we can reach \([(13,13), (14,12)]\), then we do (indeed this is part of the target region so we are done), otherwise we aim at \([(14,12), (26,0)]\). Importantly, when aiming at \([(14,12), (26,0)]\) we want to go as close as possible to \((14,12)\) as specified by the preference. Here this means choosing only action \texttt{RIGHT}, and not \texttt{DOWN}.
In the other three segments there is a single target segment.
\end{example}

In general, a policy program is a sequence of \texttt{Do Until} loops:
\begin{lstlisting}
Do:
    P
Until(e)
\end{lstlisting}
This is interpreted as: run the local program $P$ until reaching the edge $e$, which we call the local goal.
A local program is a set of instructions of the form
\begin{lstlisting}
From s ->
	Target s1, Preference: v1
	Else Target s2, Preference: v2
	...
	Else Target sk, Preference: vk
\end{lstlisting}
where $s,s_1,\dots,s_k$ are segments and $v_1,\dots,v_k$ states, with $v_i$ an extremal point of $s_i$.
It is interpreted as: from any state $v$ in the segment $s$, let $i$ be the \textit{least} index such that there is a move from $v$ to $s_i$;
move to the reachable state of $s_i$ closest to $v_i$. 

\section{The tree of shortest paths}
\label{sec:tree}

\subsection{The backward algorithm}
\label{subsec:backward_algorithm}
We introduce an algorithm computing the set of winning states, meaning for which there exists a path to the target region. 
At a high-level, the algorithm is a generic backward breadth-first search algorithm: starting from the target region, it builds and expands a tree where the nodes represent states that can reach the target.

The backward algorithm builds a tree as follows. The root is a special node, whose children are all the edges of the target region.
Nodes are pairs consisting of a segment $[v_1,v_2]$ and a region $R$ such that $[v_1,v_2] \subseteq R$.
To expand a node $([v_1,v_2], R)$, we identify the region $R' = \adj(R, [v_1,v_2])$ sharing $[v_1,v_2]$ with $R$, if it exists.
We then consider the set of states of $R'$ for which there exists a move to a state in $[v_1,v_2]$: it is the convex combination of segments included in the edges of $R'$.
For each such segment $[v'_1,v'_2]$, we remove from it all segments already appearing in a node of the tree, and if the segment $[v''_1,v''_2]$ it yields is non-empty, then we add a node $([v''_1,v''_2], R')$ as a child of the node $([v_1,v_2], R)$.

\begin{example}\label{ex:backward-tree-construction}
    \begin{figure}
        \centering
        \begin{subfigure}[b]{0.45\linewidth}
            \includegraphics[width=\linewidth]{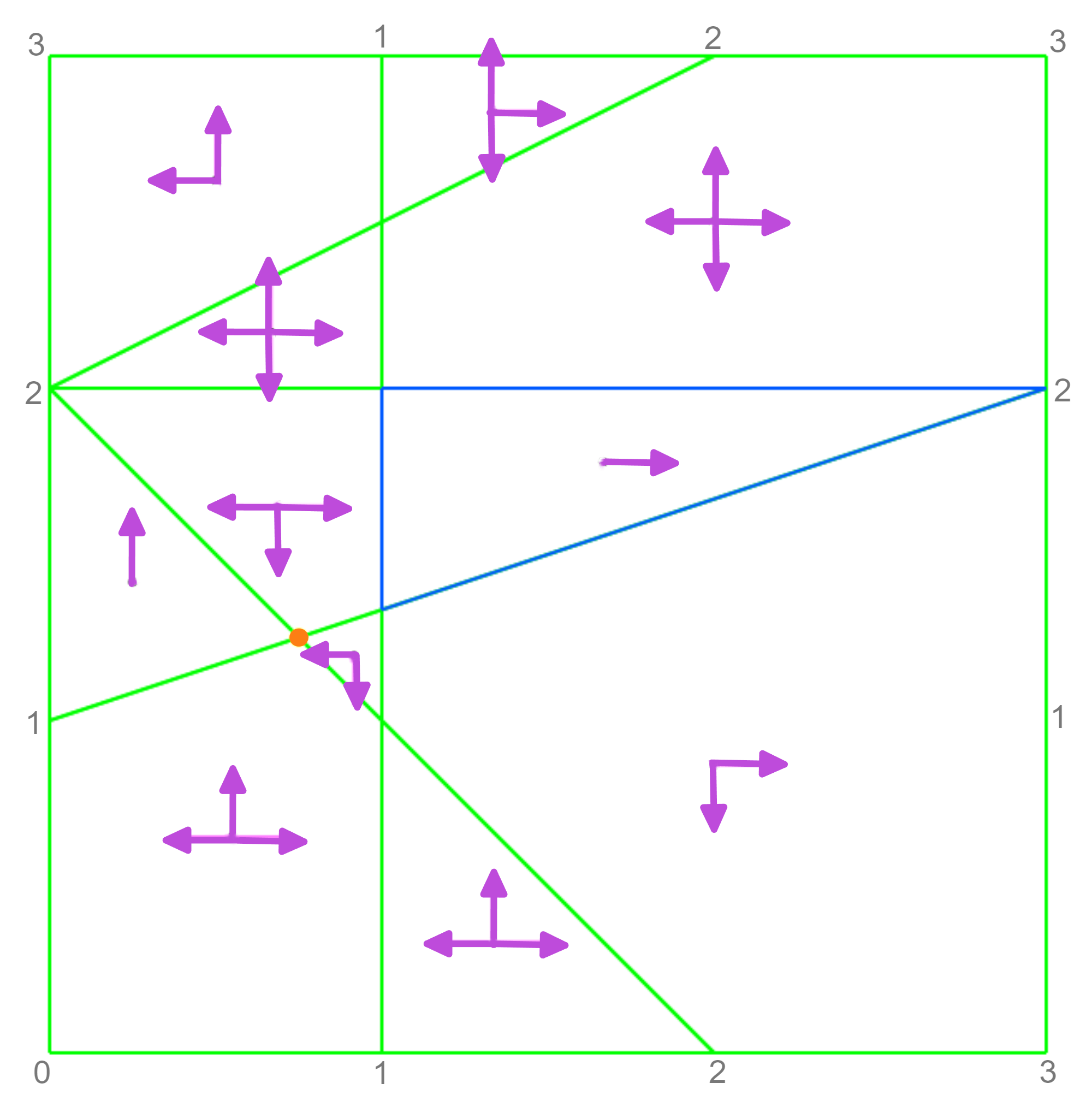}
            \caption{A gridworld}
            \label{fig:s3p5}
        \end{subfigure}
        \begin{subfigure}[b]{0.45\linewidth}
            \includegraphics[width=\linewidth]{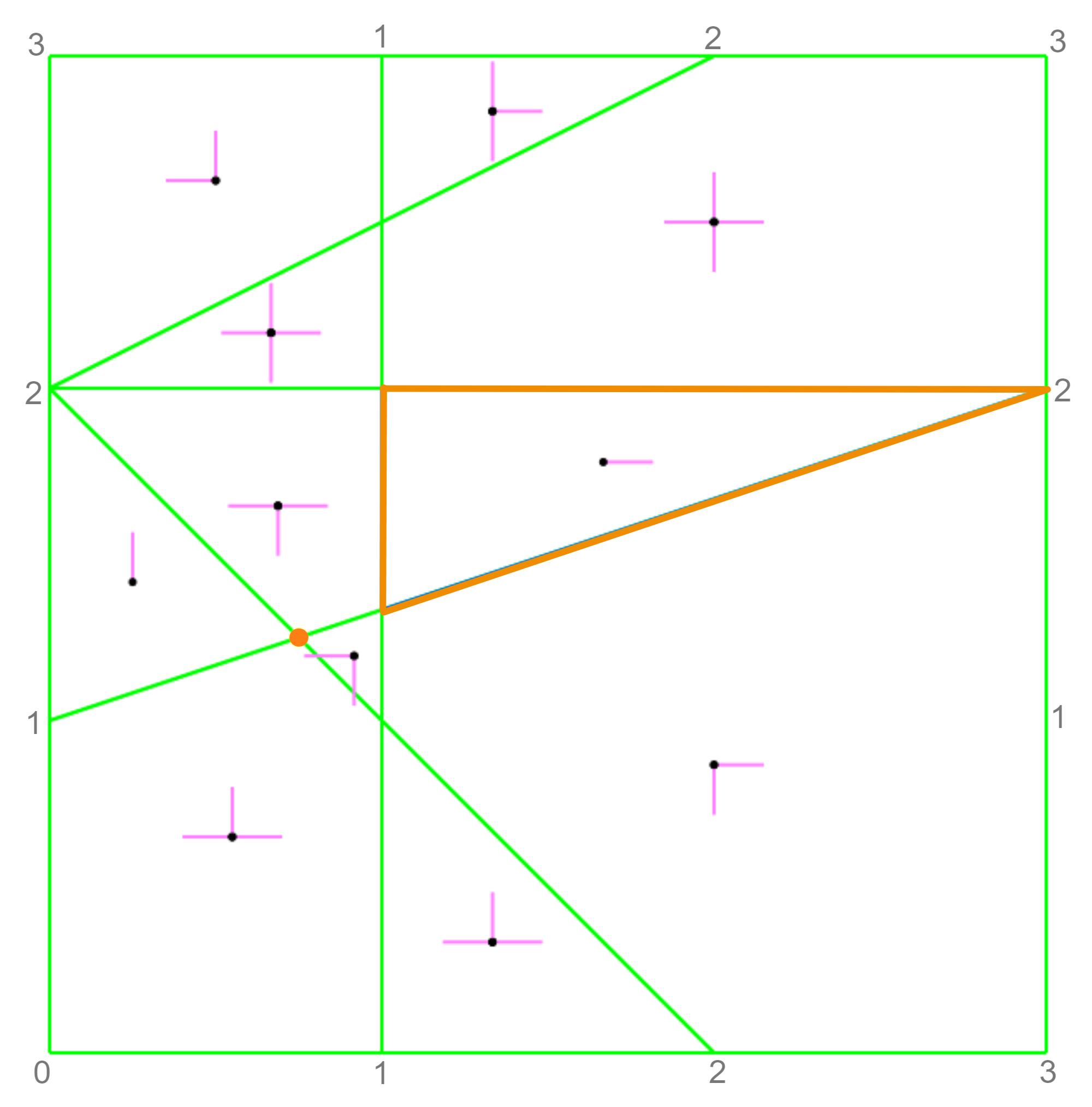}
            \caption{Depth 1}
            \label{fig:s3p5-backward-tree-0}
        \end{subfigure}
        \begin{subfigure}[b]{0.45\linewidth}
            \includegraphics[width=\linewidth]{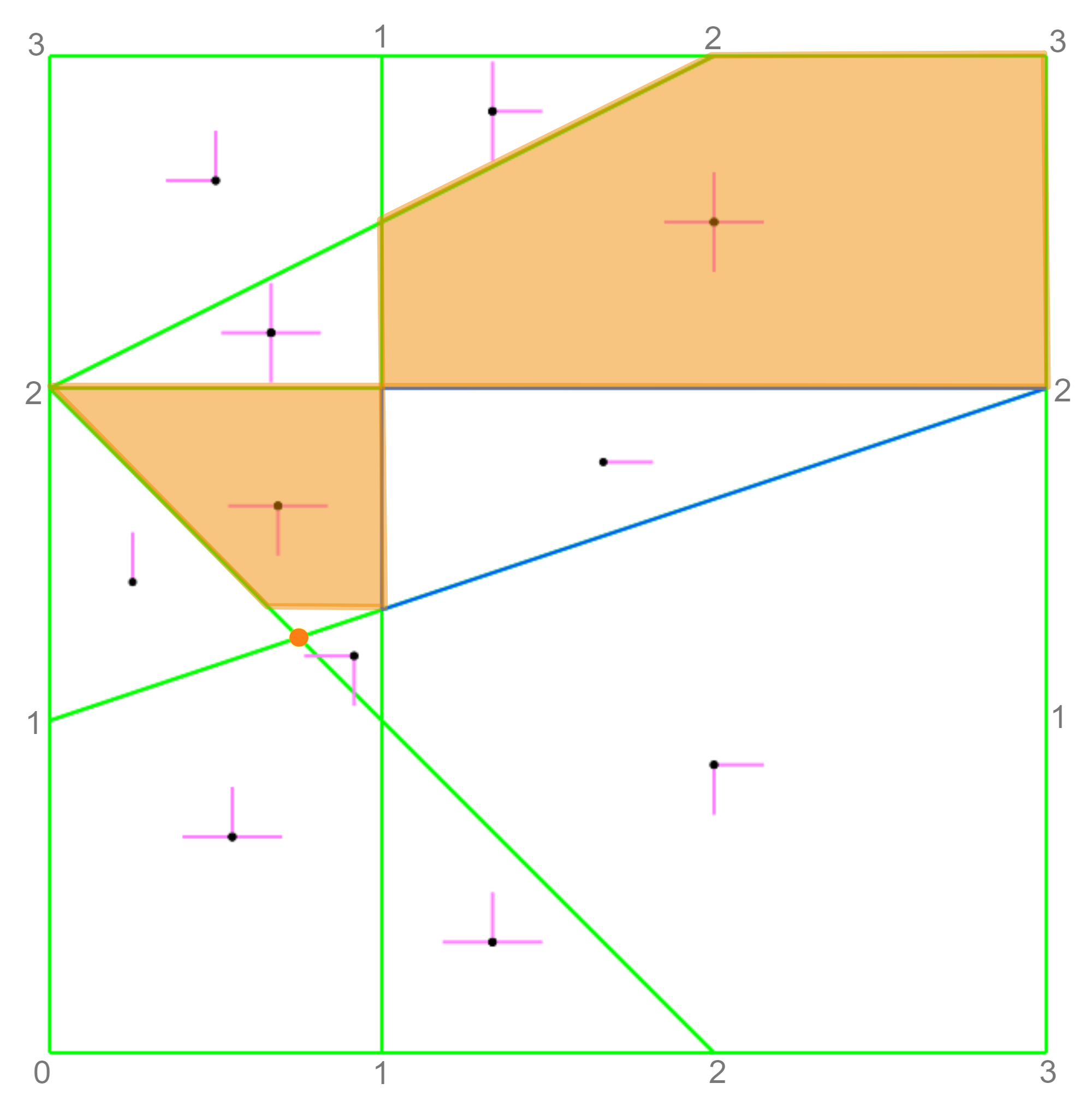}
            \caption{Depth 2}
            \label{fig:s3p5-backward-tree-1}
        \end{subfigure}
        \begin{subfigure}[b]{0.45\linewidth}
            \includegraphics[width=\linewidth]{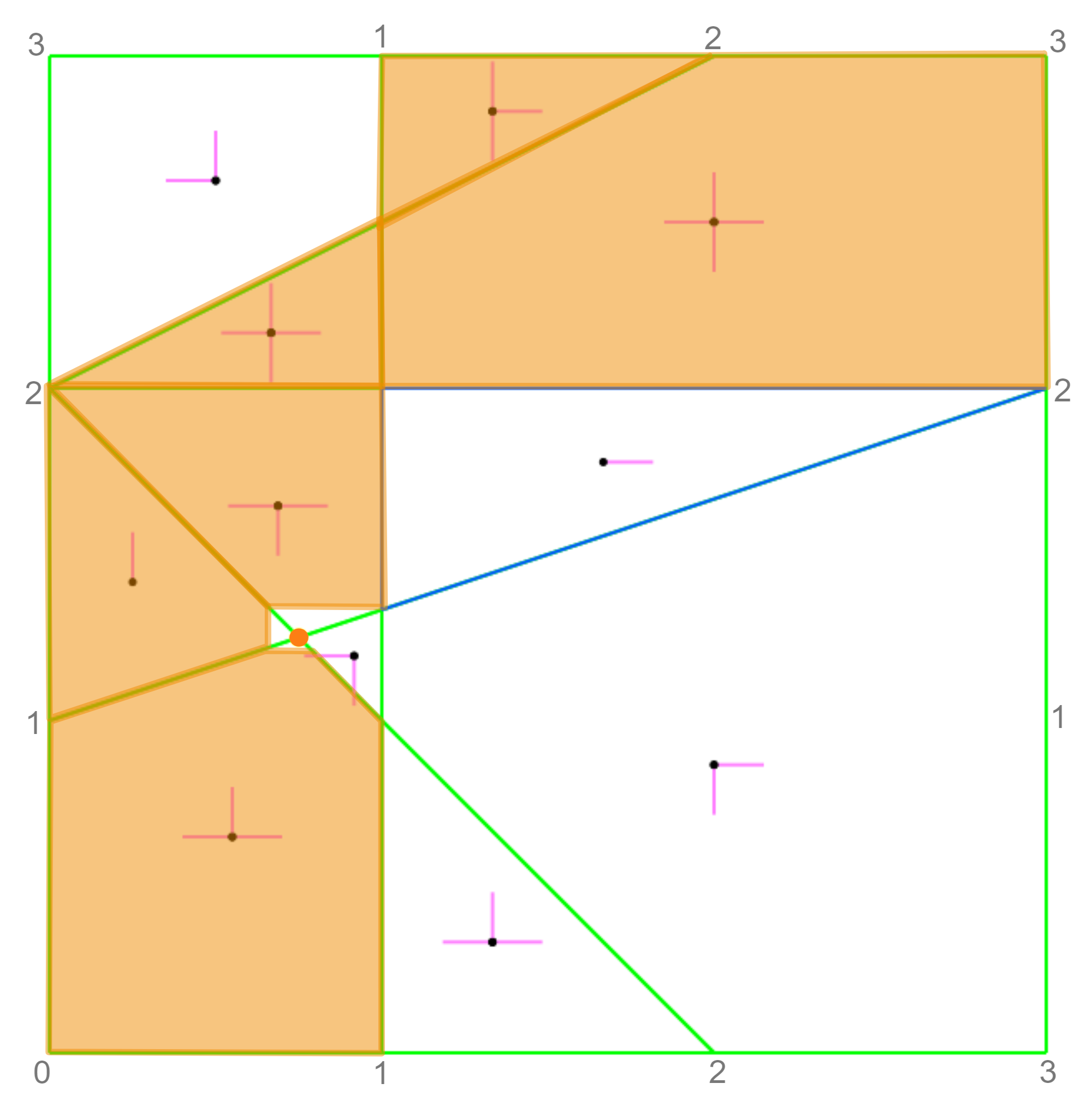}
            \caption{Depth 4}
            \label{fig:s3p5-backward-tree-3}
        \end{subfigure}
        \caption{Backward construction of the tree of the winning region}
        \label{fig:backward-tree-contruction}
    \end{figure}

In Figure \ref{fig:backward-tree-contruction}, we can visualize how the backward algorithm works. The tree itself can be seen in the appendix (Figure \ref{fig:trees-winning-region}).
\end{example}

By construction, from each state in a segment of a node of the tree we can construct a path to the target region. 
This is a shortest path, where shortest means that it is minimal in number of moves.
Conversely, every state on an edge for which there exists a path to the target region belongs to some node of the tree.
Since there might be more than one state in its parent segment that is reachable, the tree represents a family of paths to the target region. 
Importantly, these paths never visit the same point twice: they are \textit{non self-intersecting}. This is due to how we have constructed the tree, filtering out parts of segments which we have already visited.

The algorithm does not terminate in general: in this example, the tree is infinite, it contains states closer and closer to the initial state but never that state. One of the main results of~\cite{AsarinMP95} is a termination argument, giving bounds on how large the tree can be to include a fixed initial state. This implies the decidability of our problem, but does not address the question of representation of paths.
Interestingly, this decidability result is complemented by an undecidability result for the same problem in dimension 3.


\subsection{Properties of the tree}
\label{subsec:region_based}
We now prove properties of paths in the tree.
These are preliminary steps before constructing an algorithm deriving a programmatic policy from the tree.
A branch of the tree is a sequence of segments 
\[
([a_1, b_1], [a_2, b_2], \dots, [a_p, b_p]).
\]
It induces a sequence of pairs of regions and edges \(((R_1, e_1), \dots, (R_{p-1}, e_{p-1}))\) satisfying the following properties, for each \(i \in [1,p-1]\): 
(i) there exists a move between \([a_i, b_i]\) and \([a_{i+1}, b_{i+1}]\) contained in \(R_i\), and
(ii) \(R_i \neq R_{i+1}\), and
(iii) \([a_i, b_i] \subseteq e_i\), and
(iv) \(e_{i+1} \in \edges(R_i) \cap \edges(R_{i+1})\).


\begin{lemma}\label{lemma:three-time-visit}
    There exist at most two indices \(i < j \in [1,p-1]\) such that $R_i = R_j$ and $e_i = e_{i+1} = e_j = e_{j+1}$.
\end{lemma}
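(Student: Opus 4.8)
I would argue by contradiction. Suppose there are three indices $i<j<k$ in $[1,p-1]$ with $R_i=R_j=R_k=:R$ and $e_i=e_{i+1}=e_j=e_{j+1}=e_k=e_{k+1}=:e$, and first extract the geometry of this situation. Fix a concrete shortest path $\pi$ represented by the branch; by the construction of the tree $\pi$ is non-self-intersecting. Since property (ii) gives $R_m \neq R_{m+1}$, the indices $i,i+1,j,j+1,k,k+1$ are pairwise distinct; and since later nodes of the backward construction have their segments trimmed of every earlier segment, the six segments $[a_i,b_i],\dots,[a_{k+1},b_{k+1}]$ are pairwise disjoint subsegments of $e$. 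Because an edge is shared by at most two regions, property (iv) forces $R_{m-1}=R_{m+1}=\adj(R,e)=:\bar R$ for each $m\in\{i,j,k\}$ that is interior to the branch; hence at each of these steps $\pi$ crosses the line $\mathcal L$ supporting $e$, leaving into $\bar R$ and (for all but at most one of $i,j,k$, the exception arising only when $i=1$ or $k=p-1$) also arriving from $\bar R$. Finally, since $R$ is convex and $e$ is one of its edges, any move inside $R$ between two points of $e$ is contained in $R\cap\mathcal L = e$; so the three moves $[a_i,b_i]\!\to\![a_{i+1},b_{i+1}]$, $[a_j,b_j]\!\to\![a_{j+1},b_{j+1}]$, $[a_k,b_k]\!\to\![a_{k+1},b_{k+1}]$ all lie along $e$, the edge $e$ is axis-aligned, and, taking place in the same region $R$, they all point in the same direction along $e$. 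After reflecting we may assume $e$ is horizontal, $R$ lies above $\mathcal L$, $\bar R$ below, and these three moves run rightward.

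Next I would set up the planar picture. The path $\pi$ traverses the three bounces in the order of the agent's trajectory, say $T_k$ then $T_j$ then $T_i$ (the bounce at the largest branch index is traversed first), where $T_m\subseteq e$ is the oriented subsegment of $\mathcal L$ run through during bounce $m$. Each $T_m$ is traversed left-to-right; $\pi$ reaches its left endpoint coming from $\bar R$ (from below), runs along $\mathcal L$, and leaves its right endpoint going down into $\bar R$ — with a single possible exception at a branch endpoint. The key claim to establish here is that the portion of $\pi$ between its first and last bounce stays in the closed lower half-plane bounded by $\mathcal L$ and meets $\mathcal L$ exactly along $T_i,T_j,T_k$: a further meeting of $\mathcal L$ would, near $e$, revisit a point of $e$ already used during a bounce (contradicting non-self-intersection) or require crossing $\mathcal L$ without a corresponding node on the branch.

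With this in hand I would run a Jordan-curve argument. The sub-arcs of $\pi$ joining consecutive bounces, together with the segments of $\mathcal L$ between them, cut the lower half-plane into regions; comparing the left-to-right order of $T_i,T_j,T_k$ on $\mathcal L$ with the order in which $\pi$ visits them and using that each $T_m$ is traversed rightward, one of these connecting arcs is forced to separate the remaining bounce segment from the part of $\pi$ that still has to reach it, so $\pi$ crosses itself. The orderings that are not, by themselves, contradictory (the "staircase", where the spatial order equals the visiting order) must be excluded using the trimming rule of the backward construction: the third bounce would have to re-use a part of $e$ that has already been removed from the tree, or would route a connecting arc through an already-built node. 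Finishing the (small) case analysis, and dispatching the boundary cases $i=1$ and $k=p-1$ — where exactly one of the six excursion-ends is absent, which does not suffice to avoid the forced crossing — completes the contradiction.

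The main obstacle is precisely this last step: converting "a non-self-intersecting path touching a line from one side along three disjoint segments, each run through in a fixed direction and visited in a prescribed order" into a rigorous self-crossing argument, and in particular making the bookkeeping of "which part of $e$ remains available after two bounces" precise enough to rule out the staircase configurations and then splice it cleanly into the planar picture. The convexity and cardinal-direction facts used above are routine; everything hard is concentrated in this combinatorial–topological core.
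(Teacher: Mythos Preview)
Your high-level strategy matches the paper's: assume three such indices, and derive a contradiction with the non-self-intersection of the path encoded by the branch, using the ``trimming of explored segments'' in the backward construction. Where you diverge is in the execution, and the paper's is both simpler and avoids the difficulties you identify at the end.

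The paper does \emph{not} set up a half-plane picture or a Jordan-curve argument. Instead it uses the trimming rule as the \emph{primary} tool from the very first step, not only to eliminate a residual ``staircase'' case. Writing the common edge as $[a,b]$ and orienting it, the paper argues: when the node $[a_{j+1},b_{j+1}]$ was extended, the child $[a_j,b_j]$ had the already-explored part removed, so $[a_j,b_j]\subseteq[a,a_{j+1}]$ or $[a_j,b_j]\subseteq[b_{j+1},b]$. In the first case, since some $x\in[a_{j+1},b_{j+1}]$ is reachable from $a_j$ and every $y\in[a,a_{j+1}]$ has $x-y$ in the \emph{same} direction as $x-a_j$, the cone structure of $\actions(R)$ forces all of $[a,a_{j+1}]$ to be reachable and hence explored at that stage. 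Iterating this for the second and third visits pins down the relative positions of all six subsegments along $[a,b]$ up to four symmetric cases, and in each case a single picture (the paper's Figure for the case $[a_j,b_j]\subseteq[a,a_{j+1}]$, $[a_i,b_i]\subseteq[b_{j+1},a_{i+1}]$) shows that any path visiting $[a_{k+1},b_{k+1}]\to[a_i,b_i]\to[a_{i+1},b_{i+1}]\to[a_j,b_j]\to[a_{j+1},b_{j+1}]$ must cross itself. That is the whole proof; no half-plane containment is needed.

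Your route, besides being longer, also has genuine gaps. First, the claim ``the edge $e$ is axis-aligned'' is false in this model: regions are arbitrary convex polygons, and a move along a diagonal edge is perfectly compatible with a cone generated by two orthogonal cardinals. Second, ``taking place in the same region $R$, they all point in the same direction along $e$'' does not follow merely from sharing the action cone; both $d$ and $-d$ along $e$ can lie in $\actions(R)$ (e.g.\ $e$ horizontal with $\leftact,\rightact\in\actions(R)$). You would need the trimming argument \emph{here already} to force a direction --- which is exactly how the paper starts. Third, your ``key claim'' that the portion of $\pi$ between the first and last bounce stays in the closed lower half-plane and meets $\mathcal L$ only along $T_i,T_j,T_k$ is not established: the path may cross $\mathcal L$ along another edge collinear with $e$, or enter $R$ through an edge of $R$ other than $e$, or touch $e$ on a subsegment disjoint from the three bounces; none of these are ruled out by non-self-intersection alone. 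In short, the ``combinatorial--topological core'' you flag as the main obstacle is precisely what the paper sidesteps by leading with the trimming/cone argument rather than saving it for a special case.
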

\begin{proof}
    Arguing by contradiction, suppose there exist 3 indices \(k < i < j\) such that 
    $R_k = R_i = R_j$ and $e_k = e_{k+1} = e_i = e_{i+1} = e_j = e_{j+1}$. 
    Let us denote the common edge by the segment \([a,b]\) and without loss of generality, assume for each of the segments \([a_i, b_i]\), \([a_{i+1}, b_{i+1}]\), \([a_j, b_j]\), \([a_{j+1}, b_{j+1}]\), \([a_k, b_k]\), and \([a_{k+1}, b_{k+1}]\), the first vertex of the segment is closer to \(a\) and the second vertex is closer to \(b\). This orientation makes the following arguments easier.

    In the rest of the proof, we base our arguments on the algorithm used to construct the tree of the winning region. First, let us place ourselves in the situation when the leaf associated to the segment \([a_{j+1}, b_{j+1}]\) was being extended. \([a_j, b_j]\) is a segment from which there exists a path to \([a_{j+1}, b_{j+1}]\). As we filter out parts of edges which have already been explored, either \([a_j, b_j] \subseteq [a, a_{j+1}]\) or \([a_j, b_j] \subseteq [b_{j+1}, b]\). We consider the first case \([a_j, b_j] \subseteq [a, a_{j+1}]\). Then, in fact we have a path from each point in \([a, a_{j+1}]\) to \([a_{j+1}, b_{j+1}]\). This is due to the fact that there exists \(x \in [a_{j+1}, b_{j+1}]\) such that \(x - a_j\) is included in the cone of available actions in \(R_j\). As for each \(y \in [a, a_{j+1}]\), \(x-y\) is along the same direction as \(x-a_j\), it is also in the cone of available actions. Thus, the whole segment \([a, b_{j+1}]\) has been explored until now.

    The next time we visit this edge, we have that \([a_{i+1}, b_{i+1}] \subseteq [b_{j+1}, b]\). Similar to before, since \([a, b_{j+1}]\) has been explored we have either \([a_i, b_i] \subseteq [b_{j+1}, a_{i+1}]\) or \([a_i, b_i] \subseteq [b_{i+1}, b]\). We consider the former case. So \([a, b_{i+1}]\) has been explored and thus \([a_{k+1}, b_{k+1}] \subseteq [b_{i+1}, b]\). Now, one can see in Figure \ref{fig:three-times-edge-visit-lemma} that any path represented by the segments which visits \([a_{k+1}, b_{k+1}] \to\) \([a_i, b_i]\to\) \([a_{i+1}, b_{i+1}]\to\) \([a_j, b_j] \to\) \([a_{j+1}, b_{j+1}] \) is necessarily self-intersecting. We have completed the case \([a_j, b_j] \subseteq [a, a_{j+1}]\) and \([a_i, b_i] \subseteq [b_{j+1}, a_{i+1}]\). Each of the three other cases can be verified similarly, they all give us self-intersecting paths.
\end{proof}

\begin{figure}
    \centering
    \includegraphics[width=0.5\textwidth]{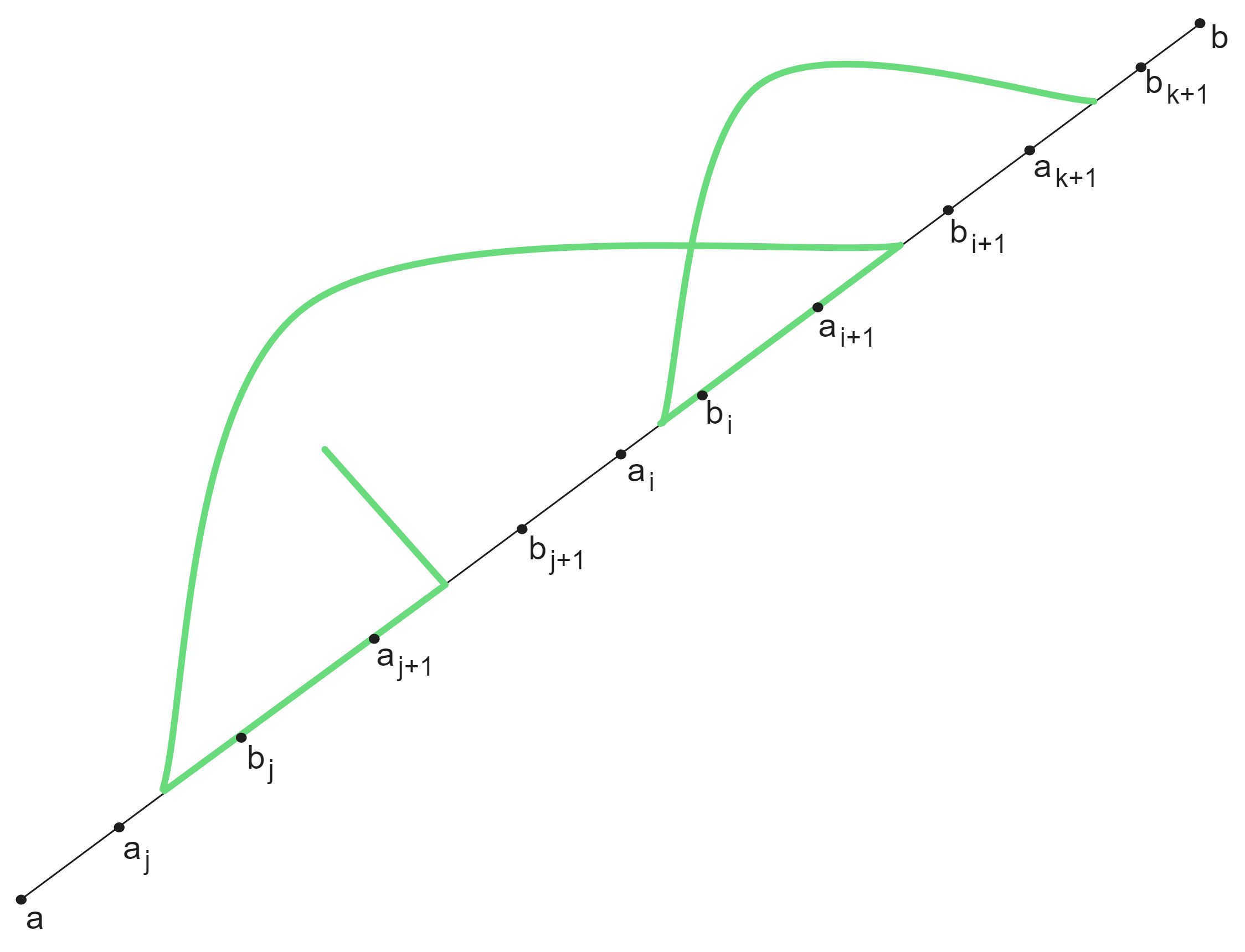}
    \caption{Any path which visits \([a_{k+1}, b_{k+1}] \to\) \([a_i, b_i]\to\) \([a_{i+1}, b_{i+1}]\to\) \([a_j, b_j] \to\) \([a_{j+1}, b_{j+1}] \) is necessarily self-intersecting.}
    \label{fig:three-times-edge-visit-lemma}
\end{figure}

\begin{lemma}\label{lemma:edges-two-ortho-dirs}
    Assume there exist indices \(i < j \in [1,p-1]\) such that \((R_i, e_i) = (R_j, e_j)\), \(e_i \neq e_{i+1}\) and \(\actions(R_i)\) contains at least two orthogonal directions. Then,
    \begin{enumerate}
        \item if the sequence of segments forms an \textbf{inner loop} at \((R_j, e_j)\), then all the edges of regions \textbf{inside the loop} are not visited by the subsequence \((e_1, \dots, e_j)\).
        \item if the sequence of segments forms an \textbf{outer loop} at \((R_j, e_j)\), then all the edges of regions \textbf{outside the loop} are not visited by the subsequence \((e_1, \dots, e_j)\).
        \item if \(j\) is the least index such that \(i < j\) and \((R_j, e_j) = (R_i, e_i)\), we can construct \([a'_{j+1}, b'_{j+1}]\) from \([a_{j}, b_{j}], [a_i, b_i]\) and \([a_{i+1}, b_{i+1}]\) such that \([a'_{j+1}, b'_{j+1}] \subseteq [a_{j+1}, b_{j+1}]\) and each point in \([a'_{j+1}, b'_{j+1}]\) is reachable from a point in \([a_j, b_j]\).
    \end{enumerate}
\end{lemma}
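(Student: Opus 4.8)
The plan is to handle all three parts with a single topological picture. By the discussion in Section~\ref{sec:tree}, every branch of the tree represents a family of non-self-intersecting polygonal paths in $[0,\ell]^2$; fix a representative $\pi$. The hypothesis $(R_i,e_i)=(R_j,e_j)$ with $e_i\neq e_{i+1}$ says that $\pi$ passes through the \emph{interior} of the region $R\defeq R_i=R_j$ on two separate occasions, each time starting on the edge $e\defeq e_i=e_j$, and the first time leaving through a genuinely different edge $e_{i+1}$. Writing $P_i\in[a_i,b_i]$ for the point at which move $i$ begins and $P_j\in[a_j,b_j]$ for the point at which move $j$ begins, the sub-arc $\pi_{[i,j]}$ of $\pi$ from $P_i$ to $P_j$ together with the sub-segment $\sigma\subseteq e$ joining $P_i$ to $P_j$ forms a simple closed curve $\gamma=\pi_{[i,j]}\cup\sigma$; by the Jordan curve theorem it bounds a bounded component of $\R^2\setminus\gamma$, which is what ``inside the loop'' refers to. As a preliminary step I would fix an orientation of $e$ (exactly as in the proof of Lemma~\ref{lemma:three-time-visit}) and record the consequence of the two-orthogonal-directions hypothesis: $\actions(R)$ is a full quadrant, so if $u,w\in R$ and $w-u\in\actions(R)$ then $w'-u\in\actions(R)$ for every $w'$ in the axis-aligned rectangle with opposite corners $u,w$; hence inside $R$ the set of points reachable from a fixed segment is always an axis-aligned sub-rectangle (here a sub-segment, since targets lie on edges). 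I would also note that $\pi_{[i,j]}$ enters $R$ at $P_i$ but arrives at $P_j$ from the region $\adj(R,e)$ on the opposite side of $e$, so $\gamma$ is transverse to $e$ at both endpoints, which is what makes the inside/outside distinction non-trivial and produces the two cases.

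For parts~1 and 2 I would argue by contradiction. In the inner-loop case (the continuation $[a_{j+1},b_{j+1}]$ leaves $P_j$ into the bounded side of $\gamma$), suppose some $e_m$ with $m\le j$ is an edge of a region $R'$ lying strictly inside the loop. If $i< m\le j$, then $e_m$ is crossed transversally by $\pi_{[i,j]}\subseteq\gamma$, so near the crossing the side of $\mathrm{line}(e_m)$ on which $R'$ sits contains points both inside and outside $\gamma$, contradicting $R'\subseteq\overline{\mathrm{inside}(\gamma)}$. If $m\le i$, then $\pi$ visits $[a_m,b_m]\subseteq e_m\subseteq\overline{\mathrm{inside}(\gamma)}$ at step $m$, while $\pi_{[1,i]}$ approaches $P_i$ from the $\adj(R,e)$ side and hence lies in the unbounded component; so reaching an interior point of the loop before step $i$ forces $\pi$ to cross $\gamma$. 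Since $\gamma$ is $\pi_{[i,j]}$ together with $\sigma\subseteq e$, such a crossing is either a genuine self-intersection of $\pi$ --- impossible --- or a second visit of $\pi$ to $\sigma\subseteq e$, impossible because the tree construction removes from each edge the portions already used, so the segments of the branch lying on $e$ are pairwise disjoint. This is the contradiction; the outer-loop case is symmetric, interchanging the two components of $\R^2\setminus\gamma$.

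For part~3, $j$ being least makes $\pi_{[i,j]}$ a genuine simple loop, and parts~1--2 tell us on which side of $\gamma$ the already-used (hence already-filtered) portions of $e_{j+1}$ lie. At step $j$ the path sits on $[a_j,b_j]\subseteq e$ and must leave $R=R_j$ through $e_{j+1}$. By the quadrant observation, the set of points of $e_{j+1}$ reachable from some point of $[a_j,b_j]$ inside $R$ is a sub-segment of $e_{j+1}$; the first passage --- the move from $[a_i,b_i]$ to $[a_{i+1},b_{i+1}]$ inside $R$ --- pins down which of the two generating cardinal directions of $\actions(R)$ is ``active'' between $e$ and the exit edge, and one writes this sub-segment down explicitly as $[a'_{j+1},b'_{j+1}]$ in terms of $[a_j,b_j]$, $[a_i,b_i]$ and $[a_{i+1},b_{i+1}]$ (a short coordinate computation once the orientation is normalized). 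Finally $[a'_{j+1},b'_{j+1}]\subseteq[a_{j+1},b_{j+1}]$: every point of $[a'_{j+1},b'_{j+1}]$ can reach $[a_j,b_j]\subseteq e_j$ by construction, and it was not filtered out earlier because, by parts~1--2, the parts of $e_{j+1}$ removed before this step lie on the forbidden side of $\gamma$, disjoint from $[a'_{j+1},b'_{j+1}]$.

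I expect the main obstacle to be making the inside/outside bookkeeping fully rigorous: pinning down, from the local configuration of $R$ at $e$ together with the two distinct exit edges $e_{i+1}$ and $e_{j+1}$, which component of $\R^2\setminus\gamma$ the continuation of $\pi$ enters --- hence which regions count as ``inside'' versus ``outside'' the loop --- and then checking that each of $\pi_{[1,i]}$, $\pi_{[i,j]}$ and the closing segment $\sigma$ sits on the side the argument needs. A secondary, more mechanical difficulty is the case analysis behind ``construct explicitly'' in part~3, since $\actions(R)$ can be any of the four quadrants and $e,e_{i+1},e_{j+1}$ can be in several relative positions; as in the proof of Lemma~\ref{lemma:three-time-visit}, the right move is to normalize by an orientation choice and cut the cases down by symmetry.
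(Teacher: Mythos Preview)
Your Jordan-curve framing is natural and matches the paper's informal picture, but the argument for items~1--2 has a genuine gap in the case $m\le i$. You close the loop with the segment $\sigma\subseteq e$ joining $P_i\in[a_i,b_i]$ to $P_j\in[a_j,b_j]$, and then assert that $\pi_{[1,i]}$ cannot cross $\sigma$ ``because the tree construction removes from each edge the portions already used, so the segments of the branch lying on $e$ are pairwise disjoint.'' Pairwise disjointness only tells you that an earlier segment $[a_k,b_k]\subseteq e$ with $k<i$ is disjoint from $[a_i,b_i]$ and from $[a_j,b_j]$; it does \emph{not} prevent $[a_k,b_k]$ from sitting in the open gap on $e$ between these two segments, and that gap is precisely (a subset of) $\sigma$. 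So the crossing you are trying to exclude is not excluded at all by the filtering property. In fact the paper's proof begins exactly at this point: it observes that if an inner edge is visited before the loop, the path \emph{must} enter and exit through this gap on $e$, producing indices $k<l<i$ with $e_k=e_l=e$, and only then derives a contradiction.

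The missing ingredient is the two-orthogonal-directions hypothesis, which you record as a ``preliminary'' but never use for items~1--2. The paper uses it essentially: normalizing so that $\leftact\in\actions(R)$, one of $\upact,\downact$ is also available; in the $\upact$ case, $p_{i+1}$ is reachable from $[a_k,b_k]$, so that segment would already have been consumed when $[a_{i+1},b_{i+1}]$ was extended; in the $\downact$ case, $p_{j+1}$ is reachable from $[a_i,b_i]$, so $[a_i,b_i]$ would already have been consumed when $[a_{j+1},b_{j+1}]$ was extended. Either way the configuration with $[a_k,b_k]$ in the gap cannot arise. Your purely topological route cannot reproduce this step, because the obstruction is not topological: it comes from the \emph{monotone} reachability inside $R$ granted by the quadrant of actions, combined with the order in which the backward BFS explores segments. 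Your sketch for item~3 is closer in spirit to the paper's coordinate computation, but there too you will need the same ``which quadrant'' case split (and the observation that the third cardinal direction must be absent) rather than relying on items~1--2 to locate the filtered parts of $e_{j+1}$.
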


\begin{figure*}[!ht]
    \centering
    \begin{subfigure}[b]{0.49\textwidth}
        \centering
        \includegraphics[width=\textwidth]{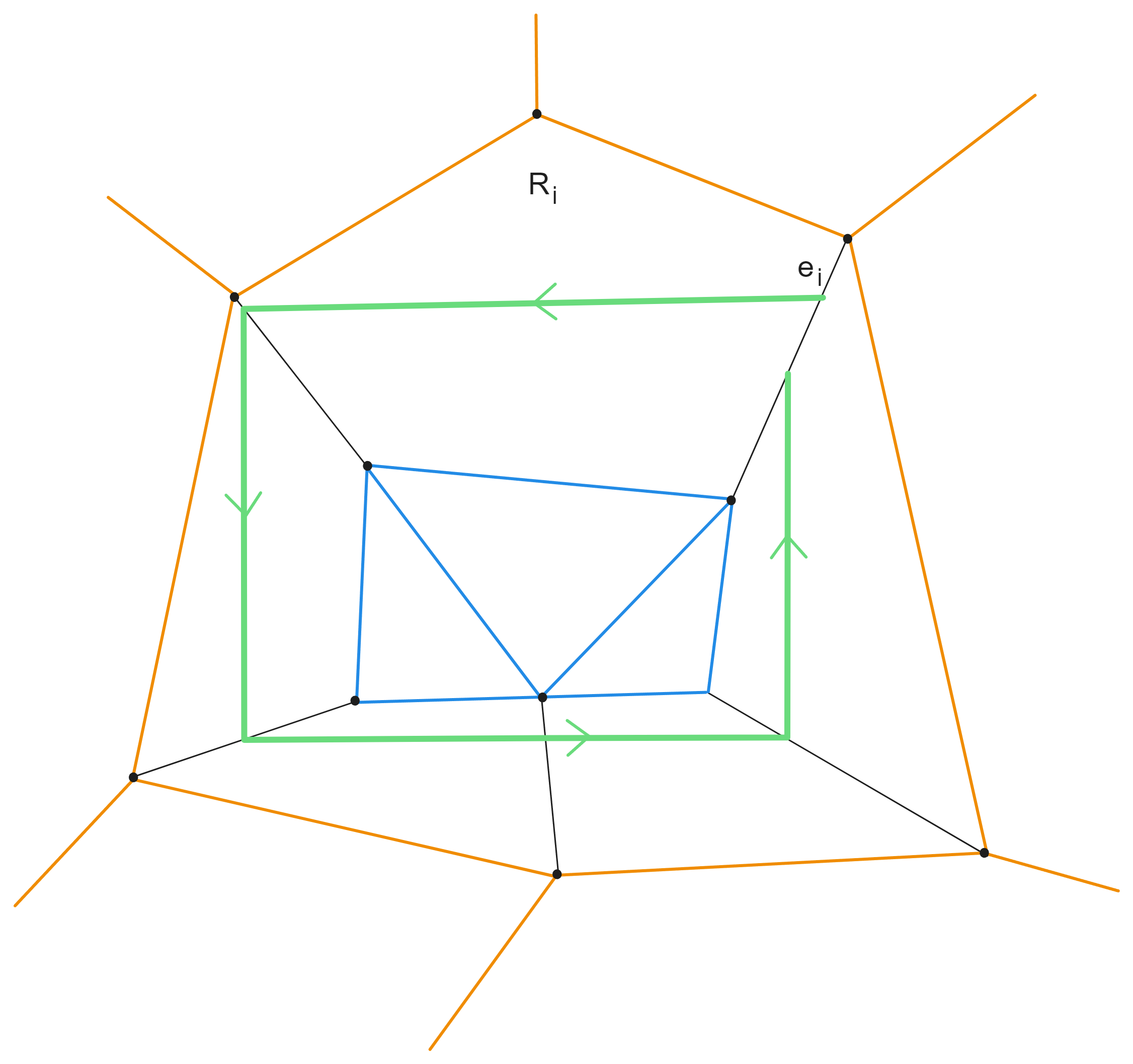}
        \caption{}
        \label{fig:loop-partition-edges}
    \end{subfigure}
    \begin{subfigure}[b]{0.49\textwidth}
        \centering
        \includegraphics[width=\textwidth]{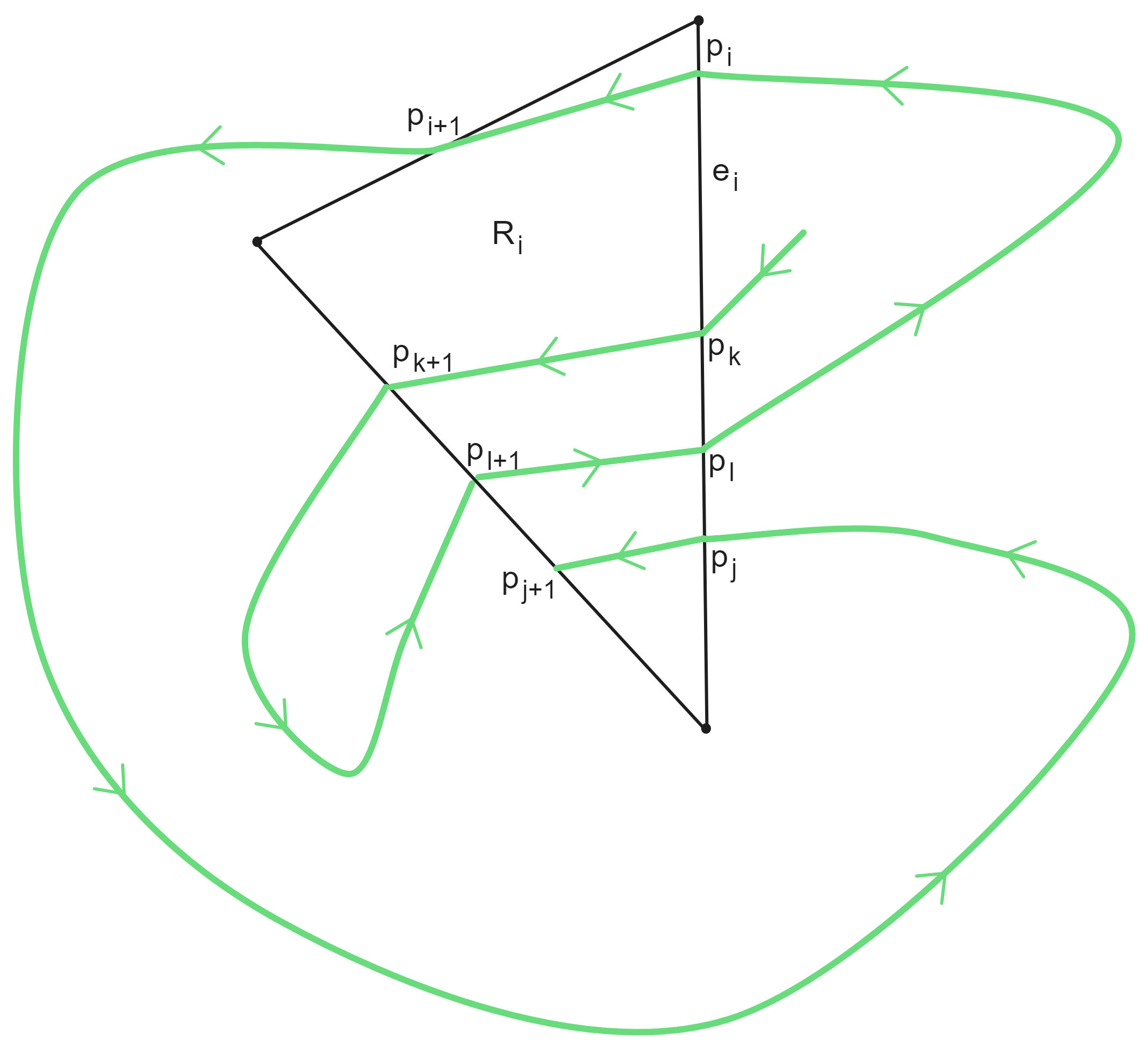}
        \caption{}
        \label{fig:loop-cannot-visit-inside}
    \end{subfigure}
    \begin{subfigure}[b]{0.49\textwidth}
        \centering
        \includegraphics[width=\textwidth]{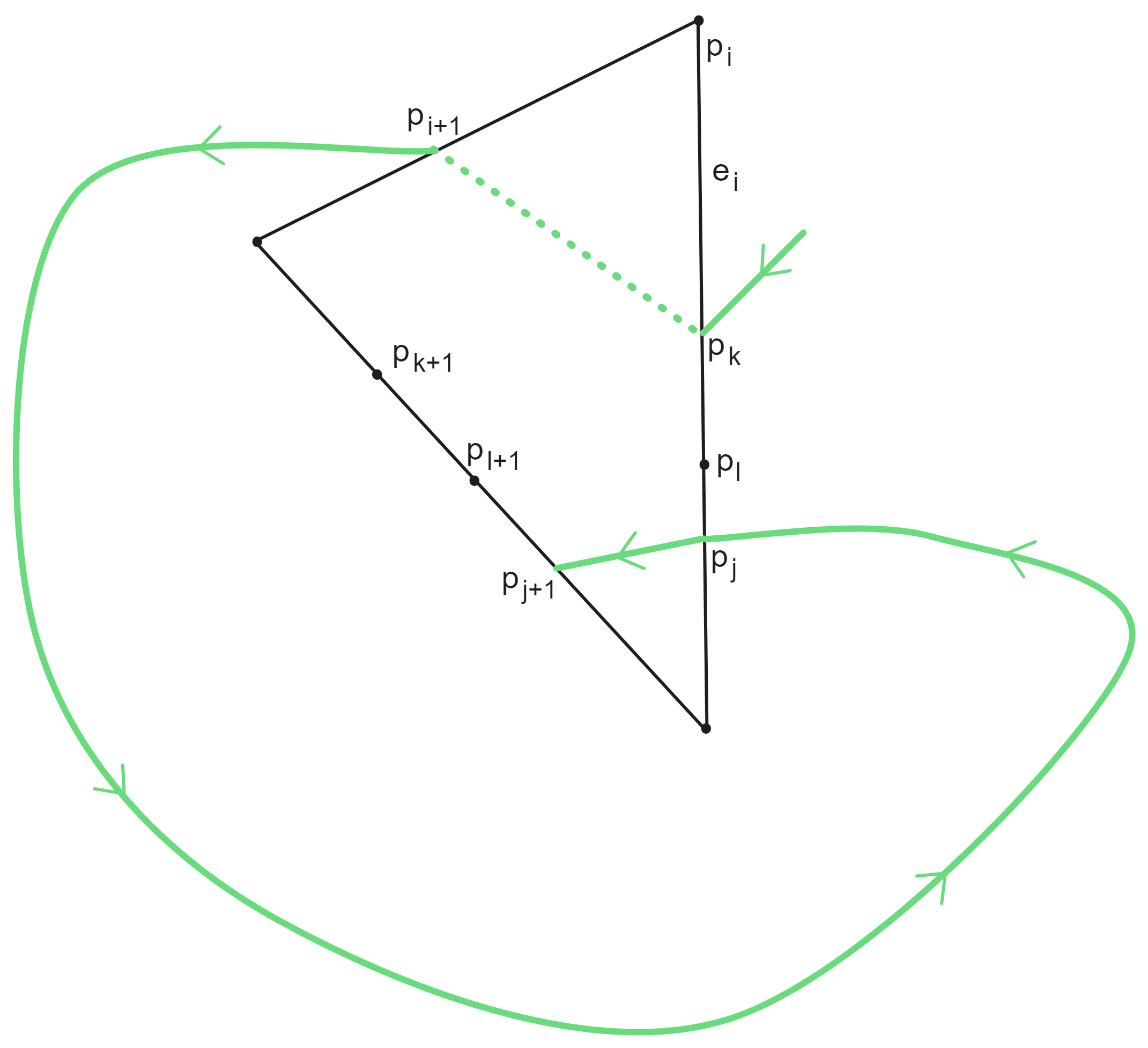}
        \caption{}
        \label{fig:loop-cannot-visit-inside-up-act}
    \end{subfigure}
    \begin{subfigure}[b]{0.25\textwidth}
        \centering
        \includegraphics[width=\textwidth]{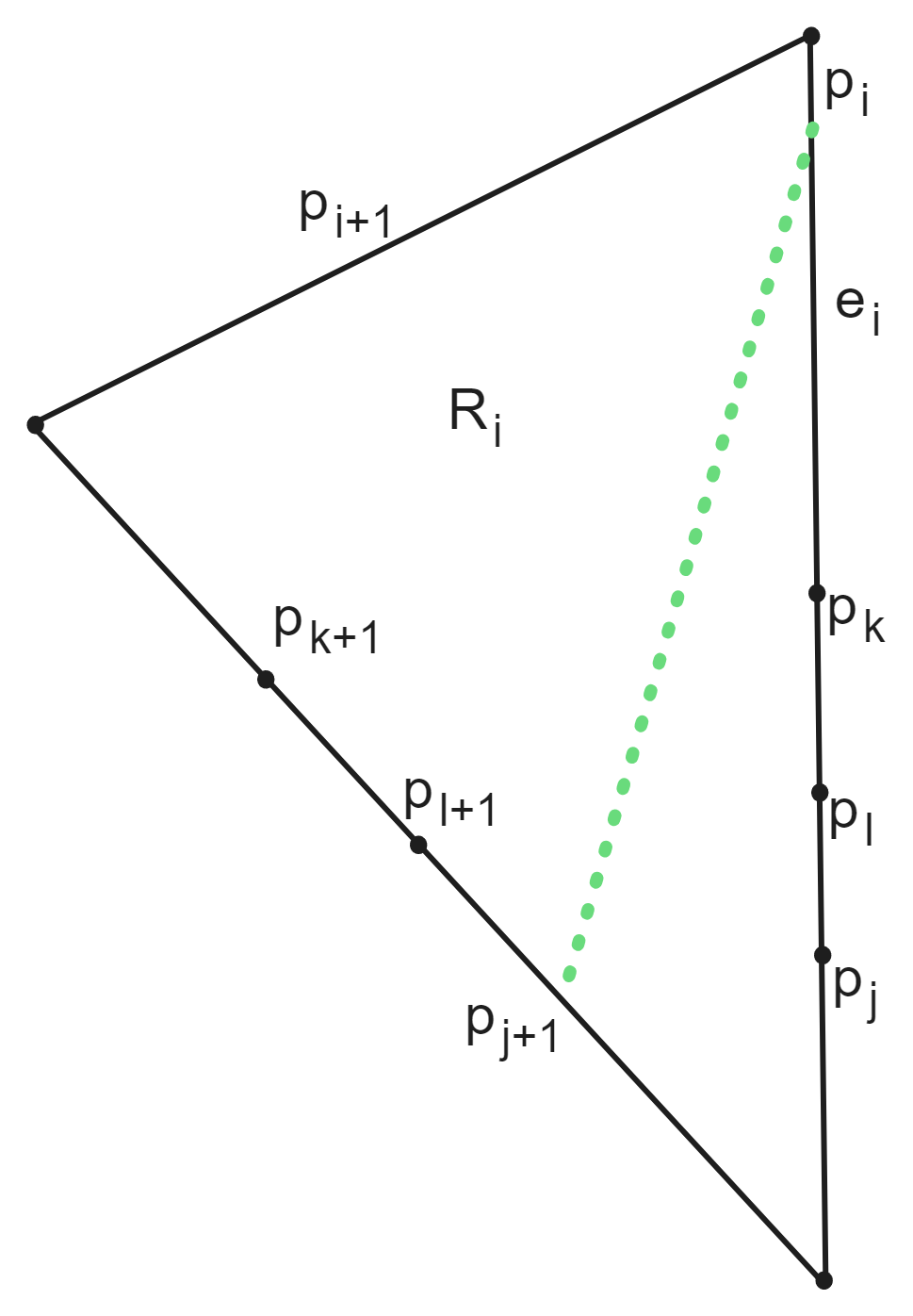}
        \caption{}
        \label{fig:loop-cannot-visit-inside-down-act}
    \end{subfigure}
    \caption{(a) Inner loop at \((R_i, e_i)\) partitions all the edges not involved in the loop into inner (blue) and outer (orange) edges (b) Loop at \((R_i, e_i)\) with the path previously visiting an inner edge at index \(k+1\) (c) Counterexample path in the case where \(\downact\) and \(\leftact\) are available in \(R_i\) (d) Counterexample path in the case where \(\upact\) and \(\leftact\) are available in \(R_i\).}
\end{figure*}

\begin{proof}
    Let us begin by understanding what we mean by a loop and edges being inside and outside loops. Looking at Figure \ref{fig:loop-partition-edges}, the path (in green) starting at \((R_i, e_i)\) forms a loop at this edge. As soon as this loop is completed, the edges of regions not involved in the loop are partitioned into disjoint two sets: the edges inside the loop (in blue) and those outside (in orange). It is important that \(e_{i+1} \neq e_i\) else the loop of edges would not be formed.

    We will now prove item 1, the proof of item 2 follows a similar pattern. To this end, we again proceed by contradiction and assume that we have a sequence of segments which visits an inner edge before forming a loop at \((R_j, e_j) = (R_i, e_i)\). Let \(p_i \in [a_i, b_i], p_j \in [a_j, b_j]\). There exists a path going from \(p_i\) to \(p_j\). Now take a look at Figure \ref{fig:loop-cannot-visit-inside} in which we can see a loop being formed by the path from \(p_i\) to \(p_j\). If this sequence of segments visits an inner edge before forming the loop, it has to pass through the space in the edge between \(p_i\) and \(p_j\) because if not, we would have a self-intersecting path. Thus, there exist indices \(k < l < i\) such that \(e_k = e_l = e_i\) which are the indices where the sequence enters and exits the edge \(e_i\) while visiting an inner edge. Let \(p_k \in [a_k, b_k], p_l \in [a_l, b_l]\) and consider a path visiting \(p_k \to\) \(p_l \to \) \(p_i \to\) \(p_j\) which can also be seen in Figure \ref{fig:loop-cannot-visit-inside}.

    Since \(\actions(R_i) \neq \emptyset\), without loss of generality, we assume that \(\leftact \in \actions(R_i)\) as seen in the figure. Using the assumption that we have at least two orthogonal directions available in \(R_i\), we have two cases: \(\upact \in \actions(R_i)\) or \(\downact \in \actions(R_i)\). In the first case, \(p_{i+1}\) would be reachable from \(p_k\) and \([a_k, b_k]\) (see Figure \ref{fig:loop-cannot-visit-inside-up-act}) and so this segment would have been explored while the node \([a_{i+1}, b_{i+1}]\) was being extended which means that \([a_k, b_k]\) cannot exist in the space between \(p_i\) and \(p_j\). Similarly in the second case, if \(\downact \in \actions(R_i)\), \(p_{j+1}\) would be reachable from \(p_i\)  and \([a_i, b_i]\) (see Figure \ref{fig:loop-cannot-visit-inside-down-act}) and therefore for the same reasons, \([a_i, b_i]\) cannot exist above \(p_j\). This concludes the proof of item 1.
\end{proof}

\section{Constructing programmatic policies}
\label{sec:deriving_policy_programs}
What remains to be done is derive from the tree a programmatic policy; one could say ``compress'' paths, or find regularity. This is the purpose of Algorithm~\ref{alg:synth-policy}. Lemma \ref{lemma:edges-two-ortho-dirs} provides the main idea for the construction of the policy synthesis procedure. As we have shown, whenever a loop is formed by the path, a new edge is discovered as soon as the loop is exited due to the non self-intersecting property. This motivates us to consider programmatic policies with a sequence of \texttt{Do Until} blocks corresponding to the sequence of edges in the order in which they are discovered by a path.


The algorithm works as follows: it takes in a sequence of segments from the tree \(([a_1, b_1], \dots, [a_p, b_p])\) and the corresponding sequence of edges \((e_1, \dots, e_p)\). It goes through both these sequences and each time a new edge is encountered, it begins a new \texttt{Do Until} block. At each iteration of the loop, if it sees that a segment of the next edge is already a target, i.e., if it visits the same pair of consecutive edges twice, it merges the two segments. This merging procedure ensures that when we encounter a loop in our path, segments belonging to the same edge are merged thus resulting in a compact representation of the sequence of segments. When we merge, we also set the preference depending on which side the next segment is with respect to the previous segment on the same edge. Intuitively, this allows to distinguish between \textit{inner} and \textit{outer} loops where the preference would force the policy to navigate towards a certain extreme of a segment thereby allowing the agent to progress closer towards the target region. The regions in which the allowed actions are a subset of \(\{\leftact, \rightact\}\) or \(\{\upact, \downact\}\) are handled differently by the algorithm. Since diagonal directions are not allowed in such regions, it suffices to specify the direction in which to navigate.

\begin{algorithm}[t]
    \caption{Synthesizing a programmatic policy}\label{alg:synth-policy}
    \textbf{Input:} A branch in the tree \(([a_1, b_1], \dots, [a_p, b_p])\) and the corresponding sequence of edges \((e_1, \dots, e_p)\) and regions \((R_1, \dots, R_{p-1})\)\\
    \textsf{VisitedEdges} = $\emptyset$\\
    \For{\textnormal{\(i = 1\) \textsf{to} \(p-1\)}}{
        \uIf{\textnormal{\(e_i \notin \textsf{VisitedEdges}\)}}{
            add \(e_i\) to \textsf{VisitedEdges}\\
            start new \texttt{Do Until} block with local goal $e_i$\\
        }
        \uIf{\textnormal{there is no \texttt{From \(e_i\)} in current \texttt{Do Until} block}}{
            add a \texttt{From \(e_i\)} instruction to the current block
        }
        \uIf{\textnormal{(\(\actions(R_i) \subseteq \{\leftact, \rightact\}\) or \(\actions(R_i) \subseteq \{\upact, \downact\}\)) and \(e_{i+1} \neq e_i\)}}{
            let \texttt{dir} \(\in \{\leftact, \rightact, \upact, \downact\}\) such that \texttt{dir} is the direction of \(e_{i+1}\) with respect to \(e_i\)\\
            add the instruction \texttt{GO dir} inside the \texttt{From \(e_i\)} instruction if it is not already present
        }
        \uElseIf{\textnormal{some segment \([a, b]\) included in \(e_{i+1}\) is currently the preferred target segment of \(e_i\) and \(e_{i+1} \neq e_i\)}}{
            merge \([a,b]\) and \([a_{i+1}, b_{i+1}]\) into one segment\\
            \uIf{\textnormal{merged segment is \([a, b_{i+1}]\)}}{
            set preference to $a$
            }
            \uIf{\textnormal{merged segment is \([a_{i+1}, b]\)}}{            
            set preference to $b$
            }
        }
        \uElse{
        add to the top of the instruction \texttt{From} $e_i$ a new target \([a_{i+1}, b_{i+1}]\) with preference $a_{i+1}$\\
        }
    }
\end{algorithm}



We now shift our attention to proving the correctness of Algorithm \ref{alg:synth-policy}, which means proving an \textit{expressivity result}, given by Theorem \ref{theorem:synthesis-correctness} and a \textit{succinctness result} in the form of an upper bound on the size of the synthesized programmatic policies, in Theorem \ref{theorem:synthesis-size}.

\begin{theorem}\label{theorem:synthesis-correctness}
    Given the shortest sequence of segments in the tree \(([a_1, b_1], \dots, [a_p, b_p])\) going from the initial state to the target region, Algorithm \ref{alg:synth-policy} synthesizes an optimal programmatic policy that can navigate an agent through these segments.
\end{theorem}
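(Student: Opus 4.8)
The plan is to establish two facts about the output of Algorithm~\ref{alg:synth-policy}: that it is a syntactically well-formed policy program of the subgoal DSL, and that, executed from the initial state, it traces a path visiting the edges $e_1,\dots,e_p$ (each carrying the branch segment $[a_i,b_i]$) in order, so in particular it reaches the target region in exactly $p-1$ moves. Since $([a_1,b_1],\dots,[a_p,b_p])$ is assumed to be a shortest sequence of segments, no policy can do strictly better, so this yields optimality. The second fact is proved by induction along the branch, and its one delicate case — when the path revisits an edge, i.e.\ closes a loop — is where Lemmas~\ref{lemma:three-time-visit} and~\ref{lemma:edges-two-ortho-dirs} are needed.

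\textbf{Well-formedness and block structure.} First I would describe precisely the program built by the single pass over $i=1,\dots,p-1$: reading a previously unseen edge closes the current \texttt{Do Until} block, with that edge as its local goal, and opens a fresh one, so the branch is split into \emph{phases}, phase $t$ being the run of indices between the $t$-th and the $(t+1)$-th distinct edge, and block $B_t$ having the $(t+1)$-th distinct edge (or $e_p$, for the last block) as its local goal and a body consisting of one \texttt{From $e$} instruction per edge $e$ read during phase $t$. Each \texttt{From $e$} either carries a single \texttt{GO dir} directive — emitted only when $\actions(R_i)\subseteq\{\leftact,\rightact\}$ or $\subseteq\{\upact,\downact\}$ — or a nonempty ordered list of targets. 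It then remains to check the DSL's shape conditions: every target is a segment that occurred as a node of the tree, hence a genuine sub-segment of an edge; and every preference is an extremal point of the segment it annotates — immediate in the \texttt{Else} branch (preference $a_{i+1}$) and explicit in the merge branch, where the algorithm sets the preference to the kept endpoint $a$ or $b$. Finally, since a block opened because of a new edge has that very edge as its first \texttt{From}, control flows cleanly from each block into the next.

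\textbf{The execution invariant.} The heart of the proof is the following statement, proved by induction on $i\in[1,p-1]$: after the agent has made $i-1$ moves it stands at a point $v_{i-1}$ on $e_i$, the program counter lies in the block $B_t$ of the phase containing index $i$, and within the \texttt{From $e_i$} instruction of $B_t$ the least-index target whose reachable set from $v_{i-1}$ is nonempty is the target lying on $e_{i+1}$; moving to its point closest to the prescribed preference lands on a point $v_i$ of $e_{i+1}$ for which the same statement holds at $i+1$. The base case is the initial configuration. For the inductive step, once the $i$-th move has put the agent on $e_{i+1}$: if $e_{i+1}$ is a new edge, the \texttt{Until} of $B_t$ fires, control moves to $B_{t+1}$, whose \texttt{From $e_{i+1}$} instruction has a single target (the one on $e_{i+2}$), and the claim is immediate; the axis-aligned subcase, where a \texttt{GO dir} is executed, is equally routine since from $e_i$ there is a unique admissible direction to $e_{i+1}$. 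The genuine difficulty is the case where $e_{i+1}$ was already read in phase $t$: the body is re-run, the \texttt{From $e_{i+1}$} instruction already holds several targets stacked by earlier iterations, and one must show (i) that the least-index reachable target from $v_i$ is exactly the one on $e_{i+2}$, and (ii) that the preference steers the agent to a point from which the remaining induction goes through.

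\textbf{The loop case, and why it is the obstacle.} Here the geometry is essential. Since the branch is a shortest and \emph{non self-intersecting} path, a target stacked above the $e_{i+2}$-target corresponds to an edge discovered strictly later along the branch, lying inside (resp.\ outside) the loop currently being traced; by items~1--2 of Lemma~\ref{lemma:edges-two-ortho-dirs} that edge is not among those visited before the loop closes, and together with non self-intersection it cannot be reached from $v_i$ until the loop has been completed, so the least-index rule correctly skips past it to the $e_{i+2}$-target. For~(ii), the merge-with-preference step is sound by item~3 of Lemma~\ref{lemma:edges-two-ortho-dirs}, which out of the merged segment produces a sub-segment of the current edge-segment that is entirely reachable from the previous segment and from which the loop can be continued; items~1--2 then dictate whether the preference must point at $a$ or at $b$ so that each traversal of the loop pushes the reachable window monotonically, until after finitely many traversals — a number bounded using Lemma~\ref{lemma:three-time-visit}, which also bounds how many targets a single \texttt{From} can accumulate — the block's local goal becomes reachable. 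I expect matching these algorithmic choices (when to merge; preference $a$ versus $b$) to the inside/outside-the-loop and available-directions bookkeeping of Lemma~\ref{lemma:edges-two-ortho-dirs}, via a case split on the cardinal directions in $\actions(R_i)$ paralleling that lemma's proof, to be the main obstacle. Once the invariant is in place, the agent reaches $e_p$, an edge of the target region, in exactly $p-1$ moves; every \texttt{Do Until} terminates because its goal edge is met within its phase; optimality is inherited from the branch; and the same bookkeeping feeds directly into the size estimate of Theorem~\ref{theorem:synthesis-size}.
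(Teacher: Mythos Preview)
Your inductive skeleton matches the paper's: show that whenever the agent sits at some $p_i\in[a_i,b_i]$, the policy's next move lands it in $[a_{i+1},b_{i+1}]$. But the step you flag as the obstacle---why the least-index reachable target in \texttt{From $e_i$} is the one containing $[a_{i+1},b_{i+1}]$---has a one-line resolution that you miss. Targets are pushed to the top, so anything stacked above the $[a_{i+1},b_{i+1}]$-target was added while processing some index $j>i$ with $e_j=e_i$; if that higher target were reachable from $p_i\in[a_i,b_i]$, the single move $[a_i,b_i]\to[a_{j+1},b_{j+1}]$ would shortcut the branch, contradicting that $([a_1,b_1],\dots,[a_p,b_p])$ is the \emph{shortest} such sequence. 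That is the whole argument; no loop geometry is needed here.

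Your proposed route through items~1--2 of Lemma~\ref{lemma:edges-two-ortho-dirs} does not work as written. Those items say that edges strictly inside (resp.\ outside) a closed loop are \emph{not visited} by the branch up to the index where the loop closes; but the competing targets in \texttt{From $e_i$} are, by construction, segments on edges that the branch \emph{does} visit, so the lemma gives you no information about whether they are reachable from $p_i$. Likewise, your appeal to Lemma~\ref{lemma:three-time-visit} to bound ``the number of traversals'' is unnecessary: once the invariant holds, the agent makes exactly $p-1$ moves, one per branch segment, and termination is automatic (Lemma~\ref{lemma:three-time-visit} enters only in the size bound of Theorem~\ref{theorem:synthesis-size}). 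Your treatment of the merged-target/preference case via item~3 of Lemma~\ref{lemma:edges-two-ortho-dirs} is on the right track and is essentially what the paper does. One small slip: the invariant must place the agent in the segment $[a_i,b_i]$, not merely on the edge $e_i$, or reachability of $[a_{i+1},b_{i+1}]$ from the current position is not guaranteed.
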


\begin{theorem}\label{theorem:synthesis-size}
    Given the shortest sequence of segments in the tree \(([a_1, b_1], \dots, [a_p, b_p])\) going from the initial state to the target region, Algorithm \ref{alg:synth-policy} synthesizes an optimal programmatic policy of size at most \(O(\abs{\texttt{Regions}}^4)\).
\end{theorem}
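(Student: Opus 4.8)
\emph{Proof plan.} The plan is to split the program's size into three multiplicative pieces and bound each polynomially in $\abs{\texttt{Regions}}$.

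\textbf{Reduction and the easy counts.} Since segments, edges, preference vertices and directions all have constant description size, the synthesized program has size $\Theta(B+F+T)$, where $B$ is the number of \texttt{Do Until} blocks, $F$ the total number of \texttt{From}~$e$ instructions, and $T$ the total number of target segments; so it suffices to bound these three quantities. In Algorithm~\ref{alg:synth-policy} a new block is opened only the first time an edge $e_i$ is encountered, and each edge enters \textsf{VisitedEdges} at most once, so $B\le\abs{\edges}$; since every edge is shared by at most two regions, $\abs{\edges}=O(\abs{\texttt{Regions}})$ and $B=O(\abs{\texttt{Regions}})$. Inside one block the algorithm creates at most one \texttt{From}~$e$ instruction per edge, so $F=O(B\cdot\abs{\texttt{Regions}})=O(\abs{\texttt{Regions}}^2)$.

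\textbf{The core count: $T$.} A target is appended to \texttt{From}~$e_i$ only in the last branch of the loop body --- when the consecutive pair $(e_i,e_{i+1})$ occurs and is neither handled by \texttt{GO dir} (a degenerate $\{\leftact,\rightact\}$ or $\{\upact,\downact\}$ region) nor mergeable with the current top target; every later occurrence of the same pair only triggers a merge. Hence the targets attached to $e_i$ are in bijection with the ``non-mergeable'' occurrences of pairs $(e_i,\cdot)$ along the branch, and one must bound how often this happens. Lemma~\ref{lemma:three-time-visit} bounds by a constant the occurrences of a region-internal pair $(R,e,e)$, and Lemma~\ref{lemma:edges-two-ortho-dirs} --- via the inner/outer-loop partition (items~1--2), the shortcut construction (item~3), and the non-self-intersection and minimality of the input branch --- bounds by $O(\abs{\texttt{Regions}})$ the non-mergeable occurrences of a pair $(e,e')$ with $e\ne e'$: any further occurrence would uncover a new edge, or give a strictly shorter branch, or create a self-intersection. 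Summing over the $O(\abs{\texttt{Regions}})$ possible successor edges yields $O(\abs{\texttt{Regions}}^2)$ targets per \texttt{From} instruction, hence $T=O(F\cdot\abs{\texttt{Regions}}^2)=O(\abs{\texttt{Regions}}^4)$, and the program has size $O(\abs{\texttt{Regions}}^4)$.

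\textbf{Main obstacle.} The $B$ and $F$ counts are immediate; the substance is the bound on $T$, i.e.\ showing that a consecutive-edge pair $(e,e')$ with $e\ne e'$ can be traversed non-mergeably only $O(\abs{\texttt{Regions}})$ times before the current local goal is reached. A pure pigeonhole argument is insufficient: one has to run the geometric case analysis behind Lemma~\ref{lemma:edges-two-ortho-dirs} (inner versus outer loops; two orthogonal available actions versus the degenerate regions routed to \texttt{GO dir}) and show that each extra traversal forces one of three outcomes --- discovery of a fresh edge of some region, applicability of the Lemma~\ref{lemma:edges-two-ortho-dirs}(3) shortcut (contradicting minimality of the input branch), or a self-intersecting path (impossible by the construction of the tree of shortest paths). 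A last, purely mechanical check is that every iteration of Algorithm~\ref{alg:synth-policy}'s loop --- merge and ``add to the top'' cases included --- emits only $O(1)$ program tokens, so no hidden blow-up slips in.
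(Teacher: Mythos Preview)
Your decomposition into $B,F,T$ is reasonable, but two of the three counts are miscounted, and you miss the structural observation that makes the paper's argument work.

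\textbf{The edge count is argued in the wrong direction.} From ``every edge is shared by at most two regions'' you conclude $\abs{\edges}=O(\abs{\regions})$. But that hypothesis gives $\sum_{R}\abs{\edges(R)}\le 2\abs{\edges}$, and since each region has at least three edges this yields $\abs{\edges}\ge \tfrac{3}{2}\abs{\regions}$ --- a \emph{lower} bound, not an upper bound. The paper in fact works with $\abs{\edges}\le\abs{\regions}^2$. If you plug that into your scheme you get $B=O(\abs{\regions}^2)$, $F=O(\abs{\regions}^4)$, and then your per-\texttt{From} bound of $O(\abs{\regions}^2)$ targets gives $T=O(\abs{\regions}^6)$, overshooting the claim.

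\textbf{The target count misses the key point.} Your bound of $O(\abs{\regions}^2)$ targets per \texttt{From} instruction is far too loose, and the way you invoke Lemma~\ref{lemma:edges-two-ortho-dirs} (``non-mergeable occurrences of $(e,e')$ are $O(\abs{\regions})$'') is not what that lemma says. The paper's argument is that inside a \emph{single} \texttt{Do Until} block each edge receives only a \emph{constant} number of targets. The reason is structural: a block runs from the first visit of $\bar e_m$ to the first visit of a fresh edge $\bar e_{m+1}$; by Lemma~\ref{lemma:edges-two-ortho-dirs}(1)--(2), once a loop at $(R_i,e_i)$ closes, the very next step lands on an edge that has never been visited, which immediately opens the next block. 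Hence a block contains at most one loop, repeated traversals of the loop are absorbed by merges, and together with Lemma~\ref{lemma:three-time-visit} and the degenerate-action case this gives at most a fixed constant (the paper counts $12$) of targets per edge per block. Finally, since only $m$ edges have been discovered when block $m$ runs, block $m$ has at most $O(m)$ lines, and $\sum_{m=1}^{\abs{\edges}} O(m)=O(\abs{\edges}^2)=O(\abs{\regions}^4)$.

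In short: fix the direction of the edge-count inequality (or quote planarity/Euler explicitly), and replace your $O(\abs{\regions}^2)$-targets-per-\texttt{From} estimate by the constant-per-edge-per-block bound coming from ``at most one loop per block''; that is the actual content of Lemmas~\ref{lemma:three-time-visit} and~\ref{lemma:edges-two-ortho-dirs} here.
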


In Theorem~\ref{theorem:synthesis-size}, we made the assumption that each segment in the sequence can be stored in constant space. 
This would imply that we can store rationals of arbitrary precision representing the endpoints of the segments in constant space. Obviously, this is not a valid assumption in practice and in the case where all the edges of the regions are described by rationals, we prove the following upper bound on the space required to store the segments. 

\begin{lemma}\label{lemma:size-segment}
Suppose there exists \(D \in \N\) such that each of the endpoints of each of the edges of the regions are of the form $\left(\frac{a}{D}, \frac{b}{D}\right)$ for some \(a, b \in [0, D]\), then each segment of a path of the tree \(([a_1, b_1], \dots, [a_p, b_p])\) can be stored in space at most \(O(pD\log(D))\) when both \(a_p\) and \(b_p\) can be written in the same form.
\end{lemma}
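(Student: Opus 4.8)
The plan is to bound the bit‑complexity of the rational coordinates of the endpoints of each segment $[a_i,b_i]$ by following the backward construction of the tree (Section~\ref{subsec:backward_algorithm}) and inducting on the depth $p-i$, with the base point supplied by the hypothesis on $[a_p,b_p]$. Two elementary facts about the data drive everything. First, since every vertex of every region has the form $(a/D,b/D)$ with $a,b\in\{0,\dots,D\}$, every edge of every region is contained in a line with an integer equation $\alpha x+\beta y=\gamma$ whose coefficients have magnitude $O(D^2)$ — just clear the common denominator $D$ from the two‑point form — so an edge line has an $O(\log D)$‑bit description. Second, the horizontal line $y=c$ or vertical line $x=c$ through a point whose coordinates are rationals of denominator at most $\Delta$ has an equation with coefficients of magnitude $O(\Delta)$. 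Recall also that $[a_i,b_i]$ is produced from $[a_{i+1},b_{i+1}]$ by taking the set of states of $R_i$ from which some action of the convex cone $\actions(R_i)$ reaches $[a_{i+1},b_{i+1}]$ — a polygon bounded by $[a_{i+1},b_{i+1}]$ (which lies on the edge $e_{i+1}$ of $R_i$) together with at most two rays emanating from $a_{i+1}$ and $b_{i+1}$ in the cardinal directions generating the cone — then intersecting this polygon with the edge $e_i$ of $R_i$, and finally deleting sub‑segments already present in the tree; the deletion step only ever selects, as endpoints of the new segment, points already occurring as endpoints of earlier segments, so it introduces no new coordinates and can be ignored for the complexity bound.

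The core of the argument is to enumerate how an endpoint of $[a_i,b_i]$ can arise and to track denominators case by case: (i) it is a vertex of $e_i$, hence of denominator $D$; (ii) it is the intersection of the supporting line of $e_i$ with a cardinal line through $a_{i+1}$ or $b_{i+1}$, in which case solving the corresponding $2\times 2$ linear system shows its coordinates have denominator dividing the product of a coefficient of $e_i$'s line (magnitude $O(D^2)$) with the denominator of the relevant coordinate of $a_{i+1}$ or $b_{i+1}$; (iii) it is the intersection of the supporting lines of the two edges $e_i$ and $e_{i+1}$, in which case Cramer's rule gives denominator $O(D^4)$, independent of the earlier segments. The regions with $\actions(R_i)\subseteq\{\leftact,\rightact\}$ or $\actions(R_i)\subseteq\{\upact,\downact\}$ — singled out by Algorithm~\ref{alg:synth-policy} — must be treated separately here too, but are strictly simpler: the backward image of a segment is again a segment with one coordinate held fixed. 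In every case the denominator of a coordinate of $[a_i,b_i]$ is at most a fixed polynomial in $D$ (one can take $O(D^2)$) times the largest denominator occurring among the coordinates of $[a_{i+1},b_{i+1}]$; the same bound holds for the preference point attached to the segment, since it is an extremal point of a segment.

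Inducting from $i=p$ down to $i=1$, with base case $a_p,b_p$ of denominator $D$ by hypothesis, each step multiplies the denominator bound by a $\mathrm{poly}(D)$ factor, so after the at most $p$ backward steps separating $[a_i,b_i]$ from $[a_p,b_p]$ every coordinate of $[a_i,b_i]$ is a rational whose denominator (and, since all states lie in $[0,\ell]^2$ with $\ell$ normalised, whose numerator) has magnitude at most $\mathrm{poly}(D)^{\,p}$, i.e.\ of bit‑length $O(p\log D)$. Storing the (at most four) endpoint coordinates of the segment together with its preference point therefore costs $O(p\log D)$ space, which is within the announced $O(pD\log(D))$ bound.

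I expect the main obstacle to be the geometric case analysis in the second step: one has to pin down the exact shape of the backward‑projection polygon, which depends both on which subset of the four cardinal directions generates $\actions(R_i)$ and on the relative position of $e_i$ and $e_{i+1}$, and then verify in every configuration that the new endpoints are intersections of lines of controlled (polynomial‑in‑$D$) complexity, so that no denominator can blow up super‑polynomially. Handling cleanly the axis‑parallel cone cases that the synthesis algorithm treats specially, and checking that the ``remove already‑visited segments'' operation really does reuse only previously bounded points, are minor but necessary extra wrinkles.
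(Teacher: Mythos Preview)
Your plan follows the same strategy as the paper's proof: induct on the depth of the backward construction, observe that each new segment endpoint arises as the intersection of the (integer-coefficient) line supporting an edge with an axis-parallel line through an endpoint of the parent segment, and track how the rational denominators grow step by step. The one substantive difference is quantitative. The paper first rewrites every edge line in the normalised form $y=(s/H)x+(d/H)\ell$ with a single universal denominator $H=D\cdot D!$, chosen so that $H$ is divisible by every possible $(x_2-x_1)D$; each backward step then multiplies the running denominator by $H$, and since $\log H=\Theta(D\log D)$ one lands on the stated $O(pD\log D)$. You instead keep each edge line as $\alpha x+\beta y=\gamma$ with integer coefficients of magnitude $O(D^2)$ and argue directly that one intersection multiplies the denominator by at most a $\mathrm{poly}(D)$ factor, giving $O(p\log D)$ bits --- strictly sharper than the lemma asserts. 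The paper's $D!$ is an artefact of its common-denominator bookkeeping, not of the geometry, so your improvement is real.

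Two small things to tighten when you write it out. First, the ``remove already-visited segments'' step may borrow endpoints from \emph{other branches} of the tree at the same or smaller depth, not only from ancestors on the branch you are tracking, so the induction should formally be on BFS exploration order rather than on the branch index $i$; the bound is unaffected. Second, your case~(iii) --- the intersection of the supporting lines of $e_i$ and $e_{i+1}$ --- does not actually occur in the backward step (any shared point of two edges of $R_i$ is a region vertex, hence already covered by case~(i)); the only genuinely new endpoints come from case~(ii), which is also the only case the paper analyses.
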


\section{Conclusions and future work}
\label{sec:conclusions}
This work is a first step towards theoretical foundations of programmatic reinforcement learning, and more specifically the question of designing domain-specific languages for policies. The take away message is that for a large class of environments we were able to construct programmatic policies using in a non-trivial way control loops. We proved expressivity results, meaning existence of optimal programmatic policies, as well as succinctness results, proving that there exist optimal programmatic policies of size polynomial in the number of regions.
We hope that this paper will open a fruitful line of research on the theoretical front. We outline here promising directions.

A burning question is studying the trade-offs between sizes of programmatic policies and their performances. In this paper, we focused on optimal policies, meaning shortest paths to the target region. Are there smaller programmatic policies ensuring near optimal number of moves?

The motivations of this work is to construct programmatic policies because they are readable, interpretable, and verifiable. Hence alongside with expressivity and succinctness results, we should also investigate how we can reason with programmatic policies, and in particular verify them. Developing verification algorithms for programmatic policies is a natural next step for this work. Another desirable property is generalizability: programmatic policies are expected to generalize better, as it was argued in the original papers~\cite{BastaniPS18,Inala2020SynthesizingPP,Trivedi2021LearningTS}. Further theoretical and empirical studies will help us understand this argument better.

Last but not least, once we understand which classes of programmatic policies are expressive and succinct, remains the main question: how do we learn programmatic policies? Many approaches have been developed for decision trees, PIDs, and related classes. Learning more structured programmatic policies involving control loops is a very exciting challenge for the future, which has been tackled very recently~\cite{MoraesAFL23,AleixoL23,Batz_2024}!

\section*{Acknowledgements}

This work was supported by the SAIF project, funded by the ``France 2030'' government investment plan managed by the French National Research Agency, under the reference ANR-23-PEIA-0006.

\bibliography{refs}

\begin{thebibliography}{24}
\providecommand{\natexlab}[1]{#1}

\bibitem[{Aleixo and Lelis(2023)}]{AleixoL23}
Aleixo, D.~S.; and Lelis, L. H.~S. 2023.
\newblock Show Me the Way! {B}ilevel Search for Synthesizing Programmatic
  Strategies.
\newblock In \emph{Conference on Artificial Intelligence, {AAAI}}, 4991--4998.
  {AAAI} Press.

\bibitem[{Andriushchenko et~al.(2022)Andriushchenko, Ceska, Junges, and
  Katoen}]{Andriushchenko2022FSCPOMDP}
Andriushchenko, R.; Ceska, M.; Junges, S.; and Katoen, J. 2022.
\newblock Inductive synthesis of finite-state controllers for {POMDP}s.
\newblock In \emph{Proceedings of the Conference on Uncertainty in Artificial
  Intelligence, {UAI}}, volume 180 of \emph{Proceedings of Machine Learning
  Research}, 85--95. {PMLR}.

\bibitem[{Asarin, Maler, and Pnueli(1995)}]{AsarinMP95}
Asarin, E.; Maler, O.; and Pnueli, A. 1995.
\newblock Reachability Analysis of Dynamical Systems Having Piecewise-Constant
  Derivatives.
\newblock \emph{Theoretical Computer Science}, 138(1): 35--65.

\bibitem[{{\AA}str{\"o}m and H{\"a}gglund(1995)}]{PID}
{\AA}str{\"o}m, K.; and H{\"a}gglund, T. 1995.
\newblock \emph{PID Controllers: Theory, Design, and Tuning}.
\newblock ISA - The Instrumentation, Systems and Automation Society.
\newblock ISBN 1-55617-516-7.

\bibitem[{Bastani, Pu, and Solar{-}Lezama(2018)}]{BastaniPS18}
Bastani, O.; Pu, Y.; and Solar{-}Lezama, A. 2018.
\newblock Verifiable Reinforcement Learning via Policy Extraction.
\newblock In \emph{Annual Conference on Neural Information Processing Systems,
  {NeurIPS}}, 2499--2509.

\bibitem[{Batz et~al.(2024)Batz, Biskup, Katoen, and Winkler}]{Batz_2024}
Batz, K.; Biskup, T.~J.; Katoen, J.-P.; and Winkler, T. 2024.
\newblock Programmatic Strategy Synthesis: Resolving Nondeterminism in
  Probabilistic Programs.
\newblock \emph{Proceedings of the ACM on Programming Languages}, 8(POPL):
  2792–2820.

\bibitem[{Bouajjani, Esparza, and Maler(1997)}]{BouajjaniEM97}
Bouajjani, A.; Esparza, J.; and Maler, O. 1997.
\newblock Reachability Analysis of Pushdown Automata: Application to
  Model-Checking.
\newblock In \emph{International Conference on Concurrency Theory, {CONCUR}},
  volume 1243 of \emph{Lecture Notes in Computer Science}, 135--150. Springer.

\bibitem[{Carayol and Serre(2023)}]{pushdownchapter}
Carayol, A.; and Serre, O. 2023.
\newblock Pushdown games.
\newblock In Fijalkow, N., ed., \emph{Games on Graphs}. Arxiv.

\bibitem[{Degener et~al.(2011)Degener, Kempkes, Langner, auf~der Heide,
  Pietrzyk, and Wattenhofer}]{DegenerKLHPW11}
Degener, B.; Kempkes, B.; Langner, T.; auf~der Heide, F.~M.; Pietrzyk, P.; and
  Wattenhofer, R. 2011.
\newblock A tight runtime bound for synchronous gathering of autonomous robots
  with limited visibility.
\newblock In \emph{{ACM} Symposium on Parallelism in Algorithms and
  Architectures, {SPAA}}, 139--148. {ACM}.

\bibitem[{Fijalkow et~al.(2023)Fijalkow, Bertrand, Bouyer-Decitre, Brenguier,
  Carayol, Fearnley, Gimbert, Horn, Ibsen-Jensen, Markey, Monmege, Novotný,
  Randour, Sankur, Schmitz, Serre, and Skomra}]{gamesbook}
Fijalkow, N.; Bertrand, N.; Bouyer-Decitre, P.; Brenguier, R.; Carayol, A.;
  Fearnley, J.; Gimbert, H.; Horn, F.; Ibsen-Jensen, R.; Markey, N.; Monmege,
  B.; Novotný, P.; Randour, M.; Sankur, O.; Schmitz, S.; Serre, O.; and
  Skomra, M. 2023.
\newblock \emph{Games on Graphs}.
\newblock Online.

\bibitem[{Inala et~al.(2020)Inala, Bastani, Tavares, and
  Solar{-}Lezama}]{Inala2020SynthesizingPP}
Inala, J.~P.; Bastani, O.; Tavares, Z.; and Solar{-}Lezama, A. 2020.
\newblock Synthesizing Programmatic Policies that Inductively Generalize.
\newblock In \emph{International Conference on Learning Representations,
  {ICLR}}. OpenReview.net.

\bibitem[{Landajuela et~al.(2021)Landajuela, Petersen, Kim, Santiago, Glatt,
  Mundhenk, Pettit, and Faissol}]{Landajuela2021DiscoveringSP}
Landajuela, M.; Petersen, B.~K.; Kim, S.; Santiago, C.~P.; Glatt, R.; Mundhenk,
  T.~N.; Pettit, J.~F.; and Faissol, D.~M. 2021.
\newblock Discovering symbolic policies with deep reinforcement learning.
\newblock In \emph{International Conference on Machine Learning, {ICML}},
  volume 139 of \emph{Proceedings of Machine Learning Research}, 5979--5989.
  {PMLR}.

\bibitem[{Liang et~al.(2023)Liang, Huang, Xia, Xu, Hausman, Ichter, Florence,
  and Zeng}]{Liang2022LLMCodePolicies}
Liang, J.; Huang, W.; Xia, F.; Xu, P.; Hausman, K.; Ichter, B.; Florence, P.;
  and Zeng, A. 2023.
\newblock Code as Policies: Language Model Programs for Embodied Control.
\newblock In \emph{{IEEE} International Conference on Robotics and Automation,
  {ICRA}}, 9493--9500. {IEEE}.

\bibitem[{Moraes et~al.(2023)Moraes, Aleixo, Ferreira, and Lelis}]{MoraesAFL23}
Moraes, R.~O.; Aleixo, D.~S.; Ferreira, L.~N.; and Lelis, L. H.~S. 2023.
\newblock Choosing Well Your Opponents: How to Guide the Synthesis of
  Programmatic Strategies.
\newblock In \emph{International Joint Conference on Artificial Intelligence,
  {IJCAI}}, 4847--4854. ijcai.org.

\bibitem[{Novotny(2023)}]{mdpchapter}
Novotny, P. 2023.
\newblock Markov decision processes.
\newblock In Fijalkow, N., ed., \emph{Games on Graphs}. Arxiv.

\bibitem[{Qiu and Zhu(2022)}]{Qiu2022ProgrammaticRL}
Qiu, W.; and Zhu, H. 2022.
\newblock Programmatic Reinforcement Learning without Oracles.
\newblock In \emph{International Conference on Learning Representations,
  {ICLR}}. OpenReview.net.

\bibitem[{Qu et~al.(2021)Qu, Sun, Ong, Gupta, and
  Wei}]{MinimalisticAdversarialAttacks}
Qu, X.; Sun, Z.; Ong, Y.; Gupta, A.; and Wei, P. 2021.
\newblock Minimalistic Attacks: How Little It Takes to Fool Deep Reinforcement
  Learning Policies.
\newblock \emph{{IEEE} Transactions on Cognitive and Developmental Systems},
  13(4): 806--817.

\bibitem[{Silver et~al.(2018)Silver, Hubert, Schrittwieser, Antonoglou, Lai,
  Guez, Lanctot, Sifre, Kumaran, Graepel, Lillicrap, Simonyan, and
  Hassabis}]{DeepMindChessGo}
Silver, D.; Hubert, T.; Schrittwieser, J.; Antonoglou, I.; Lai, M.; Guez, A.;
  Lanctot, M.; Sifre, L.; Kumaran, D.; Graepel, T.; Lillicrap, T.; Simonyan,
  K.; and Hassabis, D. 2018.
\newblock A general reinforcement learning algorithm that masters chess, shogi,
  and Go through self-play.
\newblock \emph{Science}, 362(6419): 1140--1144.

\bibitem[{S{\"{u}}nderhauf et~al.(2018)S{\"{u}}nderhauf, Brock, Scheirer,
  Hadsell, Fox, Leitner, Upcroft, Abbeel, Burgard, Milford, and
  Corke}]{Snderhauf2018LimitsDLRobotics}
S{\"{u}}nderhauf, N.; Brock, O.; Scheirer, W.~J.; Hadsell, R.; Fox, D.;
  Leitner, J.; Upcroft, B.; Abbeel, P.; Burgard, W.; Milford, M.; and Corke, P.
  2018.
\newblock The limits and potentials of deep learning for robotics.
\newblock \emph{International Journal on Robotics Research}, 37(4-5): 405--420.

\bibitem[{Trivedi et~al.(2021)Trivedi, Zhang, Sun, and
  Lim}]{Trivedi2021LearningTS}
Trivedi, D.; Zhang, J.; Sun, S.; and Lim, J.~J. 2021.
\newblock Learning to Synthesize Programs as Interpretable and Generalizable
  Policies.
\newblock In \emph{Annual Conference on Neural Information Processing Systems,
  {NeurIPS}}, 25146--25163.

\bibitem[{Verma et~al.(2019)Verma, Le, Yue, and Chaudhuri}]{Verma0YC19}
Verma, A.; Le, H.~M.; Yue, Y.; and Chaudhuri, S. 2019.
\newblock Imitation-Projected Programmatic Reinforcement Learning.
\newblock In \emph{Annual Conference on Neural Information Processing Systems,
  {NeurIPS}}, 15726--15737.

\bibitem[{Verma et~al.(2018)Verma, Murali, Singh, Kohli, and
  Chaudhuri}]{VermaMSKC18}
Verma, A.; Murali, V.; Singh, R.; Kohli, P.; and Chaudhuri, S. 2018.
\newblock Programmatically Interpretable Reinforcement Learning.
\newblock In \emph{International Conference on Machine Learning, {ICML}},
  volume~80 of \emph{Proceedings of Machine Learning Research}, 5052--5061.
  {PMLR}.

\bibitem[{Vinyals et~al.(2019)Vinyals, Babuschkin, Czarnecki, Mathieu, Dudzik,
  Chung, Choi, Powell, Ewalds, Georgiev, Oh, Horgan, Kroiss, Danihelka, Huang,
  Sifre, Cai, Agapiou, Jaderberg, Vezhnevets, Leblond, Pohlen, Dalibard,
  Budden, Sulsky, Molloy, Paine, G{\"{u}}l{\c{c}}ehre, Wang, Pfaff, Wu, Ring,
  Yogatama, W{\"{u}}nsch, McKinney, Smith, Schaul, Lillicrap, Kavukcuoglu,
  Hassabis, Apps, and Silver}]{DeepMindStarCraft2}
Vinyals, O.; Babuschkin, I.; Czarnecki, W.~M.; Mathieu, M.; Dudzik, A.; Chung,
  J.; Choi, D.~H.; Powell, R.; Ewalds, T.; Georgiev, P.; Oh, J.; Horgan, D.;
  Kroiss, M.; Danihelka, I.; Huang, A.; Sifre, L.; Cai, T.; Agapiou, J.~P.;
  Jaderberg, M.; Vezhnevets, A.~S.; Leblond, R.; Pohlen, T.; Dalibard, V.;
  Budden, D.; Sulsky, Y.; Molloy, J.; Paine, T.~L.; G{\"{u}}l{\c{c}}ehre,
  {\c{C}}.; Wang, Z.; Pfaff, T.; Wu, Y.; Ring, R.; Yogatama, D.; W{\"{u}}nsch,
  D.; McKinney, K.; Smith, O.; Schaul, T.; Lillicrap, T.~P.; Kavukcuoglu, K.;
  Hassabis, D.; Apps, C.; and Silver, D. 2019.
\newblock Grandmaster level in StarCraft {II} using multi-agent reinforcement
  learning.
\newblock \emph{Nature}, 575(7782): 350--354.

\bibitem[{Zhu et~al.(2019)Zhu, Xiong, Magill, and Jagannathan}]{ZhuXMJ19}
Zhu, H.; Xiong, Z.; Magill, S.; and Jagannathan, S. 2019.
\newblock An inductive synthesis framework for verifiable reinforcement
  learning.
\newblock In \emph{{ACM} {SIGPLAN} Conference on Programming Language Design
  and Implementation, {PLDI}}, 686--701. {ACM}.

\end{thebibliography}

\newpage
\onecolumn
\large
\appendix

\section{Illustration of the tree}
See Figure~\ref{fig:trees-winning-region}.
We represent 3 subtrees, each corresponding to an edge of the target region. Starting from these 3 edges, we add segments of edges of adjacent regions as nodes to the trees in a breadth-first manner. There exists a path from each state in a node to a state in its parent node. 

    \begin{figure*}
        \centering
        \begin{tikzpicture}
            \node[state, rectangle](root1){\([(1,4/3), (1,2)]\)};
            \node[state, rectangle, below of=root1, yshift=-80](root2){\([(1, 2), (3,2)]\)};
            \node[state, rectangle, below of=root2, yshift=-65](root3){\([(1,4/3), (3,2)]\)};
            \node[state, rectangle, right of=root1, xshift=70, yshift=25](root1d1l1){\([(0,2), (1,2)]\)};
            \node[state, rectangle, below of=root1d1l1](root1d1l2){\([(2/3,4/3), (0,2)]\)};
            \node[state, rectangle, below of=root1d1l2](root2d1l1){\([(1,2), (1,5/2)]\)};
            \node[state, rectangle, below of=root2d1l1](root2d1l2){\([(1,5/2), (2,3)]\)};
            \node[state, rectangle, below of=root2d1l2](root2d1l3){\([(2,3), (3,3)]\)};
            \node[state, rectangle, below of=root2d1l3](root2d1l4){\([(3,3), (3,2)]\)};
            \node[state, rectangle, right of=root1d1l1, xshift=70](root1d2l1){\([(0,1), (0,2)]\)};
            \node[state, rectangle, below of=root1d2l1](root1d2l2){\([(0,1), (2/3, 11/9)]\)};
            \node[state, rectangle, above of=root1d2l1](root1d2l3){\([(0,2), (1,5/2)]\)};
            \node[state, rectangle, right of=root2d1l2, xshift=70, yshift=25](root2d2l1){\([(1, 5/2), (1,3)]\)};
            \node[state, rectangle, below of=root2d2l1](root2d2l2){\([(1,3), (2,3)]\)};
            \draw
            (root1d1l1) edge (root1)
            (root1d1l2) edge (root1)
            (root2d1l1) edge (root2)
            (root2d1l2) edge (root2)
            (root2d1l3) edge (root2)
            (root2d1l4) edge (root2)
            (root1d2l1) edge (root1d1l2)
            (root1d2l2) edge (root1d1l2)
            (root1d2l3) edge (root1d1l1)
            (root2d2l1) edge (root2d1l2)
            (root2d2l2) edge (root2d1l2);
        \end{tikzpicture}
        \caption{Tree of the winning region corresponding to Figure \ref{fig:backward-tree-contruction} until depth 3. The leftmost nodes are the three children of the root. In this tree we only indicate segments, not the corresponding regions.}
        \label{fig:trees-winning-region}
    \end{figure*}
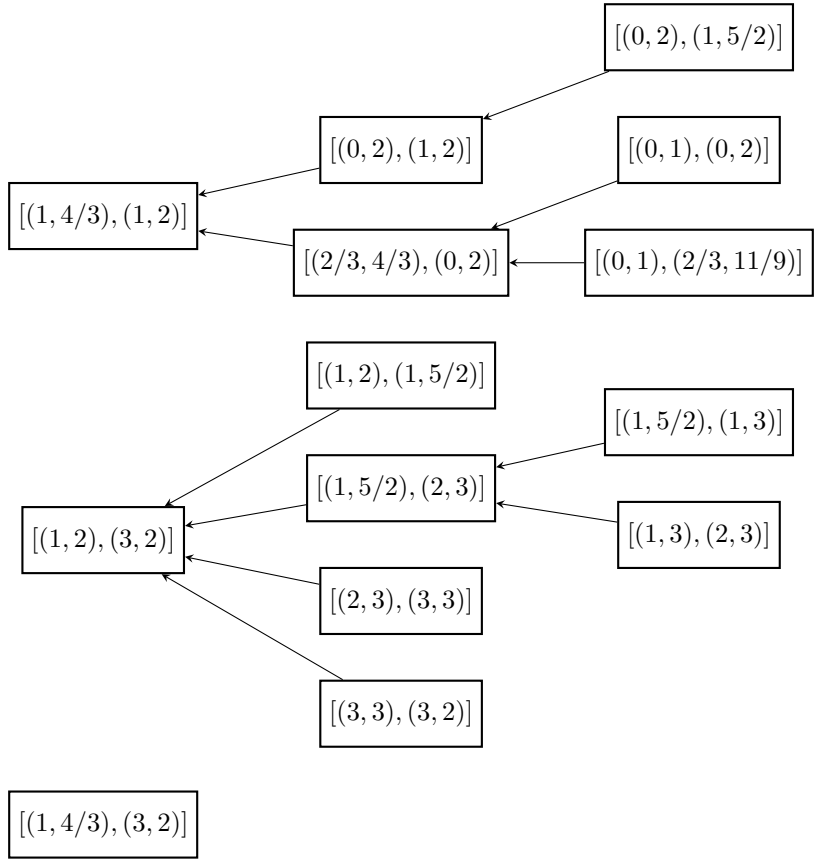

\section{Proof of the third item of Lemma~\ref{lemma:edges-two-ortho-dirs}}
We prove item 3. Let us denote \([a_z, b_z]\) by \([(x_z, y_z), (x'_z, y'_z)]\) for \(z \in \{i, i+1, j, j+1\}\). Same as before, let us assume without loss of generality that \(\leftact \in \actions(R_i)\) and \(x_{i+1} \leq x_i, x_{j+1} \leq x_j\). This can be visualized through Figure \ref{fig:loop-computation}. As at least two orthogonal directions are available in \(R_i\), let us again split into two cases with the first one being \(\upact \in \actions(R_i)\). Firstly, \([a_j, b_j]\) is below \([a_i, b_i]\) as seen in the figure because if not, \([a_{j+1}, b_{j+1}]\) (which would also have to be above \([a_{i+1}, b_{i+1}]\) to avoid self-intersecting paths) would be reachable from \([a_i, b_i]\) so it would have already been explored at index \(j\) and cannot exist there at index \(i\). By assumption, as \(j\) is the least such index satisfying the property, using arguments similar to the previous part of the proof, we have that there is no index \(k > j\) such that \([a_k, b_k] \subseteq [a_i, b_j]\). This means that when the tree node associated to the segment \([a_{j+1}, b_{j+1}]\) was being extended, \([a_i, b_j]\) was unexplored. Also, we have that \(\downact \notin \actions(R_i)\), otherwise \([a_{j+1}, b_{j+1}]\) would be reachable from \([a_i, b_i]\). Thus necessarily, \(y_{j+1} = y_j\), i.e., \(a_{j+1}\) lies on the same \(y\)-coordinate as \(a_j\). As a result, we can determine \(a_{j+1}\) simply by intersecting the line \(y = y_j\) with the region \(R_i\). Lastly, \(y'_j \leq y'_{j+1}\), i.e., \(b_j\) lies below \(b_{j+1}\) and therefore we can set \(a'_{j+1} \coloneqq a_{j+1}\) (which can be computed) and \(b'_{j+1}\) to be other point on the edge of \(R_i\) on the same \(y\)-coordinate as \(b_j\). We remark that there is a path from each point in \([a_j, b_j]\) to \([a'_{j+1}, b'_{j+1}]\): just go left! Symmetric arguments can be used to deal with the case in which \(\downact \in \actions(R_i)\).

\begin{figure}
    \centering
    \includegraphics[width=0.7\textwidth]{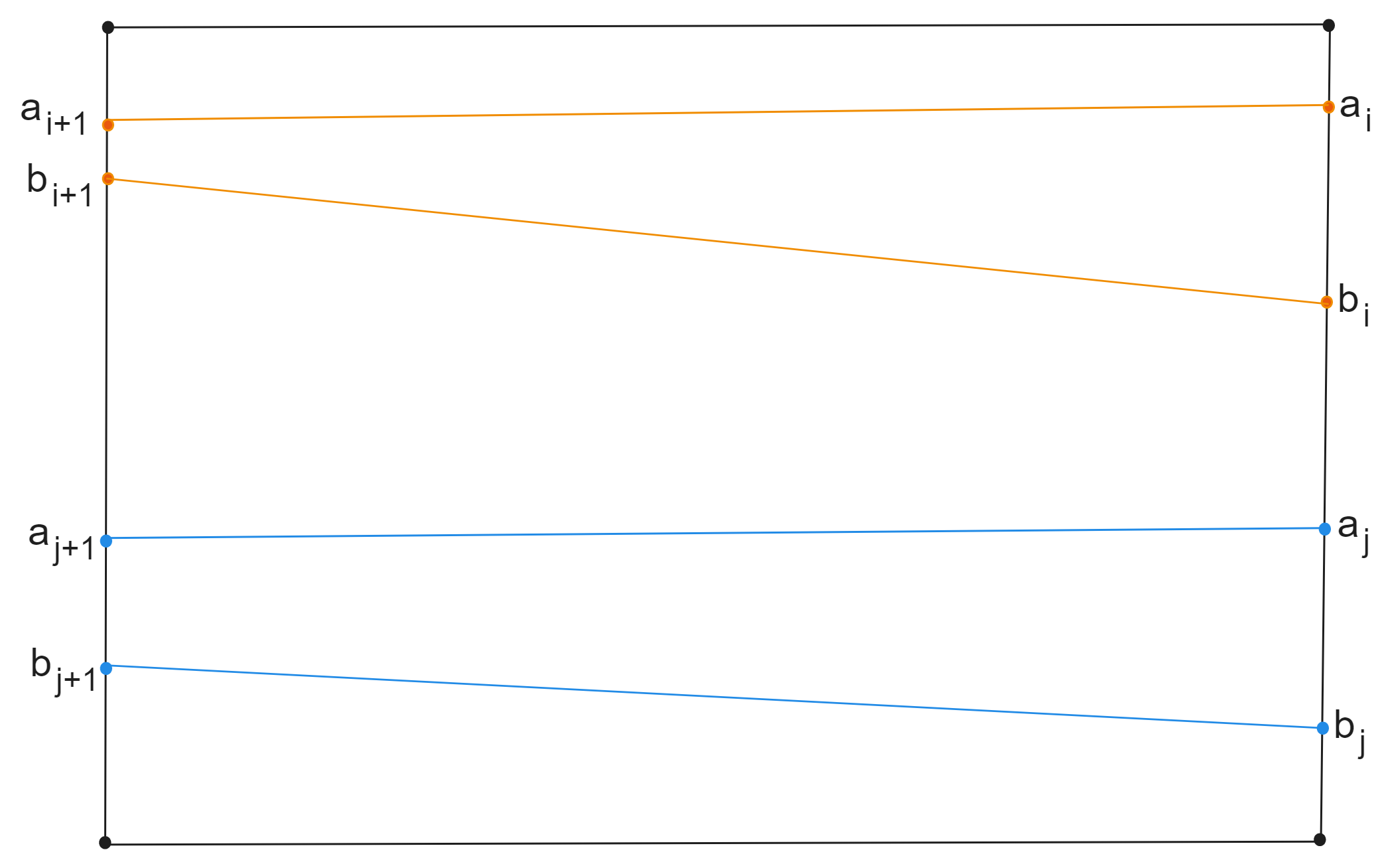}
    \caption{Constructing \([a'_{j+1}, b'_{j+1}]\) from \([a_{j}, b_{j}], [a_i, b_i]\) and \([a_{i+1}, b_{i+1}]\) in the proof of Lemma \ref{lemma:edges-two-ortho-dirs}.}
    \label{fig:loop-computation}
\end{figure}

\section{Correctness of the compression algorithm: Proof of Theorem~\ref{theorem:synthesis-correctness}}
\begin{proof}
    We will show that the synthesized policy navigates an agent through the sequence of segments in the same order. Arguing by induction, it suffices to show that if the agent is currently at a point \(p_i \in [a_i, b_i]\) for some \(i \in [1, p-1]\), the policy would guide the agent towards a point \(p_{i+1} \in [a_{i+1}, b_{i+1}]\). When the agent is at \(p_i\), the execution of the program would be in a certain \texttt{Do Until} block. Firstly, we argue that the edge \(e_i\) that contains \([a_i, b_i]\) contains at least one target in its \texttt{From} \(e_i\) instruction in the current block. This is true by construction because the synthesis algorithm processes each segment sequentially and as a result each edge that appears in the sequence would have an associated \texttt{From} instruction. Note that the only case when this would be untrue is when the goal of the \texttt{Do Until} block is reached in which case the program execution would switch to the next block.

    Next, within a \texttt{From} instruction, there may be several target segments separated by \texttt{Else} statements. Again, by construction, at least one of them is reachable from \(p_i\). Furthermore, the first reachable target segment contains \(([a_{i+1}, b_{i+1}])\) because if not, this target segment (which would appear at index greater than \(i+1\)) would be reachable from \([(a_i, b_i)]\). Consequently, we would have a shorter sequence of segments leading to a contradiction. 

    Lastly, it remains to prove that the policy would indeed navigate to a point in \(([a_{i+1}, b_{i+1}])\). Here, we need to distinguish two cases. In the first case, suppose that the target segment in the \texttt{From} \(e_i\) instruction was formed without any merging of segments. Then necessarily the target segment coincides with \(([a_{i+1}, b_{i+1}])\) or it is a region in which the allowed actions are a subset of \(\{\leftact, \rightact\}\) or \(\{\upact, \downact\}\). Both the scenarios are easily handled. The interesting case is when the target segment was formed by the merging of segments. It means that the edge \(e_i\) was visited twice and we have a loop at \(e_i\). This is where the \texttt{Preference} plays a role in navigating the agent in the correct direction. Note that at least two orthogonal actions allowed in \(\mathcal{R}_i\). As we noted in the proof of Lemma \ref{lemma:edges-two-ortho-dirs}, when we enter a loop in such a region, the segment \([a_{i+1}, b_{i+1}]\) has the same \(x\) or \(y\) coordinates as \([a_i, b_i]\) depending on the actions allowed. This means that there is a point \(p_{i+1}\) with the same \(x\) or \(y\) coordinate as \(p_i\) in the target segment. By taking another look at Figure \ref{fig:loop-computation}, we can further see that \(p_{i+1}\) coincides with the point reachable in the target segment that is extremal with respect to the \texttt{Preference}. Here, the synthesized merged segment would be \([a_{i+1}, b_{j+1}]\) with \texttt{Preference: \(b_{j+1}\)}. If the agent is at a point in \([a_j, b_j]\), and the allowed actions are \texttt{UP} and \texttt{LEFT}, the policy would navigate the agent towards a point in \([a_{j+1}, b_{j+1}]\) with the same \(y\)-coordinate (i.e., go \texttt{LEFT}) because that would be the point that is extremal with respect to the specified \texttt{Preference}.
\end{proof}

\section{Analysis of the size of programs: Proof of Theorem~\ref{theorem:synthesis-size}}
\begin{proof}
    Firstly, we remark that as we have at most \(\abs{\texttt{Regions}}^2\) edges in the gridworld, we have at most \(\abs{\texttt{Regions}}^2\) blocks \texttt{Do Until} representing the subgoals. Suppose that the sequence of segments visits \(q\) unique edges and let \(\Bar{e}_1 \to \Bar{e}_2 \to \dots \to \Bar{e}_q\) denote the sequence of these edges in the order that they are first visited. Each of these correspond to a \texttt{Do Until} block in the policy.

    Let us now analyse the size of each \texttt{Do Until} block which corresponds to a programmatic representation of a part of the sequence of segments going from \(\Bar{e}_m\) to \(\Bar{e}_{m+1}\) for a certain \(m \in [1, q-1]\). Let \(([c_1, d_1], \dots, [c_l, d_l])\) denote this sequence of segments which forms a part of the sequence \(([a_1, b_1], \dots, [a_p, b_p])\). Note that \([c_1, d_1]\) and \([c_l, d_l]\) are segments within the edges \(\Bar{e}_m\) and \(\Bar{e}_{m+1}\) respectively. Also, keep in mind the sequence of pairs of regions and edges traversed by the sequence \(((R_1, e_1), \dots, (R_l, e_l))\) which will be useful in the rest of the proof.

    Observe that each unique edge in \(\{e_1, \dots, e_l\}\) is associated with a \texttt{From} instruction within the \texttt{Do Until} block. As a first step, let us treat the indices \(i \in [1, l-1]\) such that \(e_{i+1} = e_i\). By Lemma \ref{lemma:three-time-visit}, this happens at most twice with \(e_i\) so this contributes at most two targets, and thereby two lines to the \texttt{From} instruction of \(e_i\). On the other hand, suppose \(e_{i+1} \neq e_i\). As a first subcase, suppose \(\actions(R_i) \subseteq \{\leftact, \rightact\}\) or \(\actions(R_i) \subseteq \{\upact, \downact\}\). From Algorithm \ref{alg:synth-policy} it is clear that such regions would add at most two targets (directions) to the \texttt{From \(e_i\)} instruction. Next assume at least two orthogonal directions are allowed in \(R_i\). If \(e_{i+1} \neq e_i\) at most once with \(e_i\), then it contributes only one target to the \texttt{From \(e_i\)} instruction. However, if there is another index \(j \in [1, l-1]\) such that \(e_i = e_j\) and \(e_{j+1} \neq e_j\), by Lemma \ref{lemma:edges-two-ortho-dirs}, it means that we have an inner or an outer loop. From this case, we would again have at most two targets: either \(e_{j+1} = e_{i+1}\) and so the target segments would be merged (and the loop continues) or \(e_{j+1}\) is a new edge never visited before (and the loop is exited). In other words, a loop contributes at most two target segments to each edge and there is at most one loop in a \texttt{Do Until} block.

    In total, we have at most six targets associated with each edge-region pair in the \texttt{Do Until} block. As each edge can be shared between two regions, at most twelve targets are associated with each edge. Further, since only \(m\) edges are explored by the \(m\)-th block, each block has at most \(12m\) instructions. Thus, we obtain the following bound on the total length of the policy
    \begin{equation}
        \sum_{m=1}^{\abs{\texttt{Regions}}^2} 12m = O(\abs{\texttt{Regions}}^4).
    \end{equation}
\end{proof}

\section{Analysis of the bitsize of programs: Proof of Lemma~\ref{lemma:size-segment}}
\begin{proof}
    Let \[\left[\left(\frac{x_1}{D}l, \frac{y_1}{D}l\right), \left(\frac{x_2}{D}l, \frac{y_2}{D}l\right)\right]\] be an edge of a region in \(\regions\) for some \(x_1, y_1, x_2, y_2 \in \llbracket 0, D \rrbracket\).

    Associated to this edge, we can write the two following equations for the line on which it lies on:
    \begin{align}
        y &= \frac{y_2 - y_1}{x_2 - x_1}x + \frac{(x_2 - x_1)y_1 - (y_2 - y_1)x_1}{(x_2 - x_1)D}l \\
        x &= \frac{x_2 - x_1}{y_2 - y_1}y + \frac{(y_2 - y_1)x_1 - (x_2 - x_1)y_1}{(y_2 - y_1)D}l
    \end{align}

    Note that when \(y_2 - y_1 = 0\) or \(x_2 - x_1 = 0\), one of the two equations does not exist. Observing that \((x_2 - x_1), (y_2 - y_1) \in \llbracket-D, D\rrbracket\), we have that \(H \coloneqq D(D!)\) is divisible by \((x_2 - x_1)D\) and \((y_2 - y_1)D\). So we can write
    \begin{align}\label{eq:y-x-equation-forms}
        y &= \frac{s}{H}x + \frac{d}{H}l \\
        x &= \frac{s'}{H}y + \frac{d'}{H}l
    \end{align}

    where
    \begin{equation}
        s \coloneqq H\frac{y_2 - y_1}{x_2 - x_1}
    \end{equation}

    \begin{equation}
        d \coloneqq H\frac{(x_2 - x_1)y_1 - (y_2 - y_1)x_1}{(x_2 - x_1)D}
    \end{equation}

    and similarly for \(s'\) and \(d'\). In particular, these numerators satisfy the following bounds
    \begin{equation}
        \abs{s}{} = \abs{H\frac{y_2 - y_1}{x_2 - x_1}} \leq H \abs{y_2 - y_1} \leq HD
    \end{equation}
    \begin{equation}
        \abs{d}{} = \abs{H\frac{(x_2 - x_1)y_1 - (y_2 - y_1)x_1}{(x_2 - x_1)D}} \leq \frac{H}{D}(\abs{(x_2 - x_1)y_1 + (y_2 - y_1)x_1}) \leq \frac{H}{D}2D^2 = HD
    \end{equation}

    With this, we are now able to write the equation for the line containing each of the edges as shown in \ref{eq:y-x-equation-forms}. These two forms of the equation are relevant to us because each time we are extending a node of a segment \([a_i, b_i]\) in a tree of the winning region, we are computing \([a_{i-1}, b_{i-1}]\) by intersecting an edge with the half-planes reachable by the allowed actions in \(R_{i-1}\) which amounts to finding the intersection points of the line containing \(e_{i-1}\) with a certain horizontal or vertical line. So we can substitute the value of the \(x\) or \(y\) coordinate in \ref{eq:y-x-equation-forms} to obtain the endpoints of \([a_{i-1}, b_{i-1}]\). Notice that filtering out explored parts of edges only uses precomputed intersection points.

    For example, if while extending the segment \([a_p, b_p]\), to compute \([a_{p-1}, b_{p-1}]\), we have to intersect with the line \(x = (a/H)l\) where \(a \in \llbracket 0, H \rrbracket\). Substituting this into \ref{eq:y-x-equation-forms}, gives us
    \begin{equation}
        y_{p-1} = \frac{s}{H} \frac{a}{H} l + \frac{d}{H} l = \frac{sa + dH}{H^2}l
    \end{equation}

    with \(\abs{sa + dH}{} \leq H^2D\). Now suppose while extending the segment \([a_{p-1}, b_{p-1}]\) to \([a_{p-2}, b_{p-2}]\), we have to intersect with the line \(y = \frac{u}{H^2}l\) where \(u \in \llbracket 0, H^2D \rrbracket\). Then, in the same way,
    \begin{equation}
        x_{p-2} = \frac{v}{H^3}
    \end{equation}
    for some \(v \in \llbracket 0, H^3D^2 \rrbracket\).

    Continuing this argument inductively, we get that the endpoints of the first segment \([a_1, b_1]\) can be written with coordinates of the form \[\frac{t}{H^p}\] where \(t \in \llbracket 0, H^{p}D^{p-1} \rrbracket\). Furthermore, since
    \begin{equation}
        H^pD^{p-1} = D^{2p-1} (D!)^p = O(D^{p(D+2) - 1})
    \end{equation}

    So, in order to store \([a_1, b_1]\) which potentially requires more space to store than any of the other segments, we need to store a few integers in \(\llbracket 0, H^{p}D^{p-1} \rrbracket\) which requires
    \begin{equation}
        \log(H^pD^{p-1}) = O((p(D+2) - 1)\log(D)) = O(pD\log(D))
    \end{equation}
    bytes.
\end{proof}

\section{Implementation and evaluation}
\label{sec:implementation}
Together with this article we release a small Python package\footnote{\url{https://github.com/guruprerana/smol-strats}} including modules for generating gridworld instances as well as implementation of the algorithms for the construction of the tree of the winning region and synthesis of policies. 

Gridworlds are continuous environments. In practice, we use a discretised version: the state space is a \(n \times n\). The regions are defined by linear predicates.

The \texttt{linpreds} module contains classes to generate random gridworlds. This is done by choosing at random linear predicates on a \(n \times n\) grid where the endpoints of the linear predicates are in \([0, n-1]\). We then assign random actions to each of the regions that are created by the intersections of these linear predicates. It also includes functions to generate a PRISM program from the gridworld.

The \texttt{polygons} module contains the infrastructure to translate the linear predicates generated into a data structure which makes the backward winning region construction efficient. We use the half-edge data structure (popular in computational geometry) by looking at the gridworld as a planar tiling of the grid with polygons. The \(\texttt{backward\_reachability}\) module constructs the tree of the winning region. The \texttt{game.continuous} module implements a reinforcement learning-like game environment which can simulate a policy for gridworld instances. Lastly, \texttt{policy.subgoals} implements Algorithm~\ref{alg:synth-policy} and can synthesize programmatic policies from a path of segments.

The \texttt{benchmarks} folder contains a set of 17 benchmarks including the spiral and double-pass triangle examples. The others were generated by our code and go up to instances with 50 linear predicates and 600+ regions. The synthesized policies can be seen and the policy path visualized in images in the respective folders of the benchmarks. The benchmark data can be found in Table~\ref{table:benchmarks}. 
We measure size in bytes to take into account the size of numerical coefficients involved. 
We observe that the size of the policy is polynomial (almost linear) in the size of the gridworld. Note that the size of the gridworld is the space required to store all the edges of all the regions of the gridworld.

\begin{table}[]
\centering
\begin{tabular}{|llll|}
\hline
\textbf{Benchmark}               & \textbf{Gridworld size} & \textbf{Policy size} & \textbf{Regions} \\
\hline
spiral                  & 10833        & 20847       & 14      \\
size3preds5loopy        & 8786         & 15744       & 11      \\
size50preds10-1         & 30669        & 57926       & 32      \\
size50preds20-1         & 105252       & 126263      & 113     \\
size100preds20-1        & 119253       & 136257      & 126     \\
size100preds20-2        & 108256       & 130591      & 115     \\
size100preds30-1        & 228676       & 256031      & 233     \\
size100preds30-2        & 220557       & 248063      & 230     \\
size100preds30-3        & 221308       & 244846      & 227     \\
size100preds30-4        & 266882       & 303592      & 271     \\
size100preds50-1        & 612940       & 655357      & 616     \\
size100preds50-2        & 668670       & 706836      & 668     \\
size100preds50-3        & 635978       & 663439      & 628     \\
size100preds50-4        & 538503       & 576528      & 542     \\
size100preds50-5        & 603314       & 641681      & 616  \\
\hline
\end{tabular}
\caption{Size of synthesized policies (in bytes) for a set of generated benchmarks.}
\label{table:benchmarks}
\end{table}

\begin{figure*}
    \centering
    \begin{subfigure}{0.6\linewidth}
        \includegraphics[width=\linewidth]{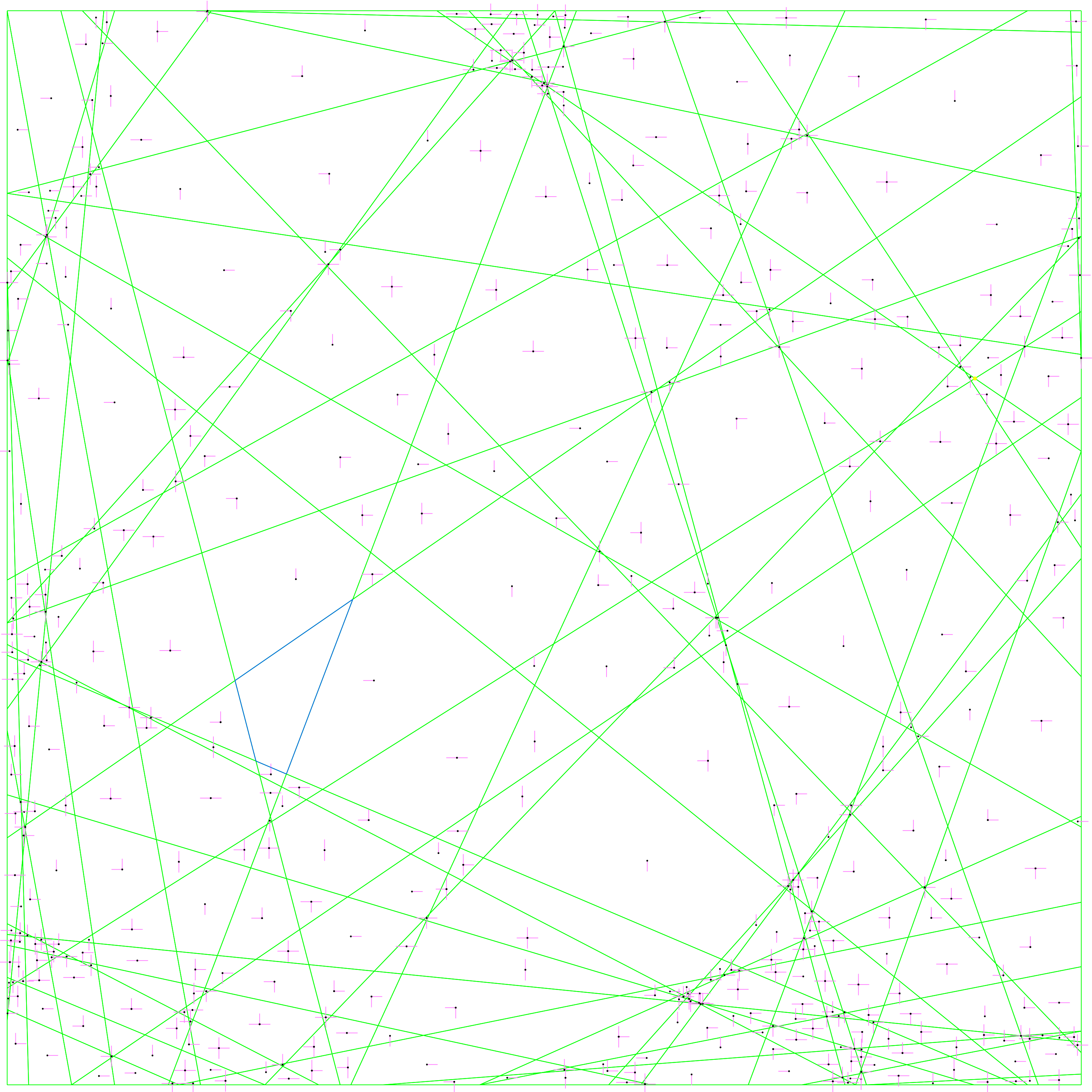}
        \caption{A gridworld}
    \end{subfigure}
    \begin{subfigure}{0.6\linewidth}
        \includegraphics[width=\linewidth]{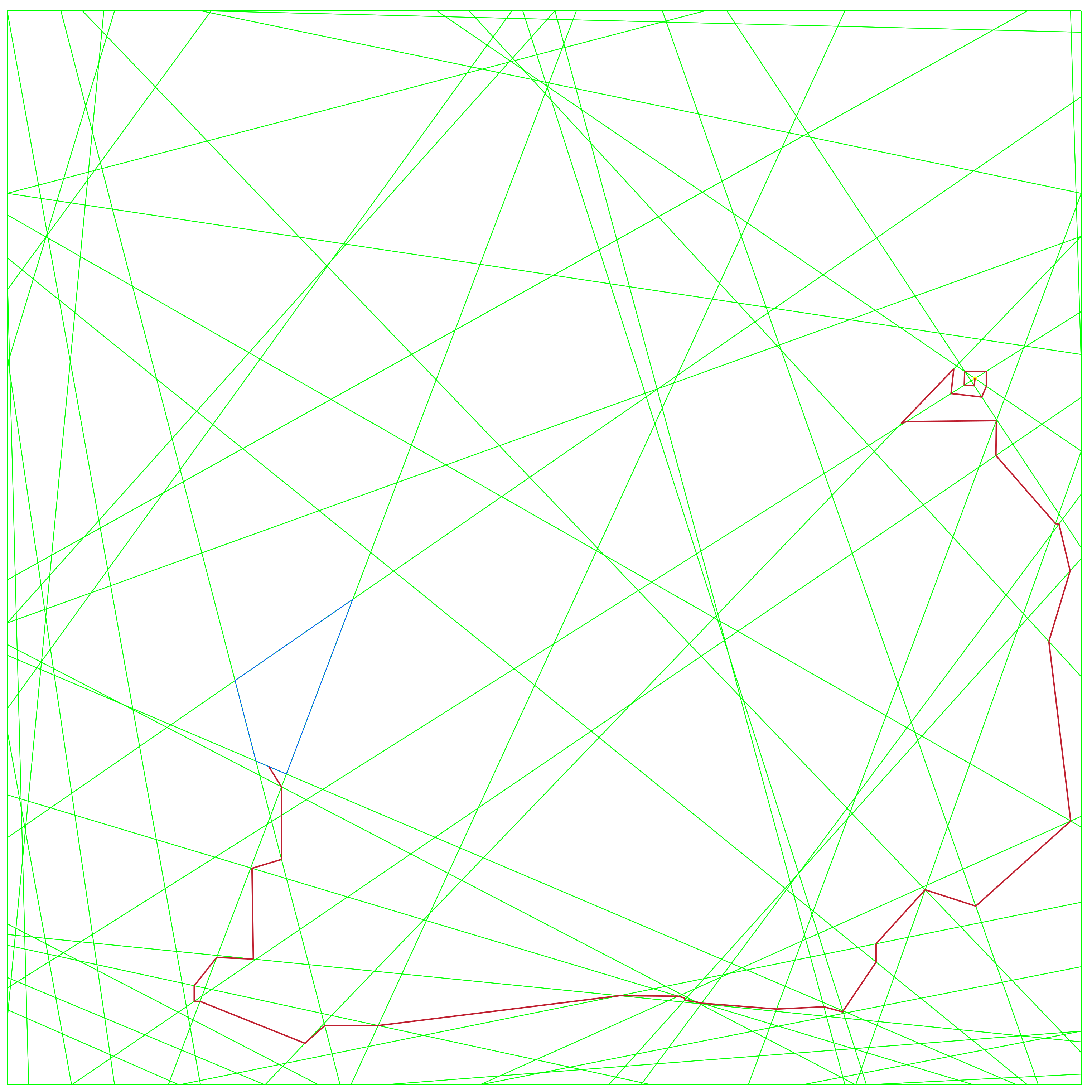}
        \caption{Synthesized policy path}
    \end{subfigure}
    \caption{The \texttt{size100preds50-4} benchmark is a gridworld with 542 regions.}
    \label{fig:size100preds50-4}
\end{figure*}

\end{document}